%

\documentclass[10pt,journal,compsoc]{IEEEtran}



\usepackage{hyperref} 

\usepackage{amsthm}
\usepackage{mathtools}
\usepackage{amssymb}
\usepackage{xcolor}

%

%
\ifCLASSOPTIONcompsoc
  \usepackage[nocompress]{cite}
\else
  \usepackage{cite}
\fi
%

\usepackage{xr}
\makeatletter
\newcommand*{\addFileDependency}[1]{
  \typeout{(#1)}
  \@addtofilelist{#1}
  \IfFileExists{#1}{}{\typeout{No file #1.}}
}
\makeatother


%
\ifCLASSINFOpdf
\else
\fi
%
%

%
\usepackage{amsmath}
\ifCLASSOPTIONcompsoc
 \usepackage[caption=false,font=footnotesize,labelfont=sf,textfont=sf]{subfig}
\else
 \usepackage[caption=false,font=footnotesize]{subfig}
\fi

\usepackage{stfloats}
\newcommand{\diag}{\operatorname{diag}}
\newcommand{\Id}{\operatorname{\boldsymbol{I}}}

\newcommand{\nbh}{\operatorname{\mathcal{N}}}

\newcommand{\interior}{\operatorname{int}}

\newcommand{\R}{\operatorname{\mathbb{R}}}
\newcommand{\N}{\operatorname{\mathbb{N}}}

\newcommand{\balpha}{\operatorname{\boldsymbol{\alpha}}}
\newcommand{\bPsi}{\operatorname{\boldsymbol{\Psi}}}
\newcommand{\bPhi}{\operatorname{\boldsymbol{\Phi}}}
\newcommand{\bTheta}{\operatorname{\boldsymbol{\Theta}}}
\newcommand{\bLambda}{\operatorname{\boldsymbol{\Lambda}}}

\newcommand{\btheta}{\operatorname{\boldsymbol{\theta}}}

\newcommand{\cL}{\operatorname{\boldsymbol{\mathcal{L}}}}

\newcommand{\bA}{\operatorname{\boldsymbol{A}}}
\newcommand{\bB}{\operatorname{\boldsymbol{B}}}

\newcommand{\bD}{\operatorname{\boldsymbol{D}}}

\newcommand{\bL}{\operatorname{\boldsymbol{L}}}

\newcommand{\bP}{\operatorname{\boldsymbol{P}}}
\newcommand{\bQ}{\operatorname{\boldsymbol{Q}}}

\newcommand{\bU}{\operatorname{\boldsymbol{U}}}

\newcommand{\bW}{\operatorname{\boldsymbol{W}}}
\newcommand{\bX}{\operatorname{\boldsymbol{X}}}
\newcommand{\bY}{\operatorname{\boldsymbol{Y}}}
\newcommand{\bZ}{\operatorname{\boldsymbol{Z}}}

\newcommand{\ba}{\operatorname{\boldsymbol{a}}}

\newcommand{\be}{\operatorname{\boldsymbol{e}}}
\newcommand{\bg}{\operatorname{\boldsymbol{g}}}

\newcommand{\bq}{\operatorname{\boldsymbol{q}}}

\newcommand{\bu}{\operatorname{\boldsymbol{u}}}

\newcommand{\bx}{\operatorname{\boldsymbol{x}}}

\newcommand{\tv}{\tilde{v}}
\newcommand{\tw}{\tilde{w}}
\newcommand{\td}{\tilde{d}}

\newcommand{\tP}{\tilde{P}}
\newcommand{\tU}{\tilde{U}}

\newcommand{\tD}{\tilde{D}}

\newcommand{\tlambda}{\operatorname{\tilde{\lambda}}}

\newcommand{\btX}{\operatorname{\tilde{\bX}}}

\newcommand{\btq}{\operatorname{\tilde{\bq}}}

\newcommand{\bhx}{\boldsymbol{\hat x}}

\newcommand{\bhg}{\boldsymbol{\hat g}}

\newcommand{\low}{\text{low}}
\newcommand{\band}{\text{band}}
\newcommand{\gcn}{\text{gcn}}

\newcommand{\ac}{\text{ac}}
\newcommand{\psct}{\text{-scat}}

\newcommand{\agg}{\operatorname{agg}}
\newcommand{\com}{\operatorname{com}}

\newcommand{\ldcb}{\operatorname{\{\!\!\{}}
\newcommand{\rdcb}{\operatorname{\}\!\!\}}}

\newtheorem{thm}{Theorem}
\newtheorem*{thm*}{Theorem}
\newtheorem{lem}{Lemma}
\newtheorem*{lem*}{Lemma}

\newtheorem{prop}{Proposition}

\theoremstyle{definition}
\newtheorem{defn}{Definition}
\newtheorem*{defn*}{Definition}
\newtheorem{rem}{Remark}

\newtheorem*{ex*}{Example}

\newtheorem{notation}{Notation}

\usepackage{flushend}

\hyphenation{op-tical net-works semi-conduc-tor}

\begin{document}
%
\title{Overcoming Oversmoothness in Graph Convolutional Networks via Hybrid Scattering Networks}
%
%
%
%

\author{Frederik~Wenkel$^*$,~
        Yimeng~Min$^*$,~
        Matthew~Hirn,~
        Michael~Perlmutter$^{\dagger}$,~
        and~Guy~Wolf$^{\dagger,**}$~
\thanks{($^*$) The first two authors contributed equally. ($^\dagger$) The last two authors jointly supervised the work. ($^{**}$) Corresponding author.}
\thanks{F. Wenkel and G. Wolf are with Mila (Quebec Artificial Intelligence Institute) and 
Department of Mathematics and Statistics, University of Montreal, Montreal, QC H3T1J4 (e-mail: frederik.wenkel@umontreal.ca, guy.wolf@umontreal.ca)}
\thanks{Y. Min is with the Department
of Computer Science, Cornell University, New York,
NY 14850, USA (e-mail: min@cs.cornell.edu)}
\thanks{M. Hirn is with Department of Computational Mathematics, Science \& Engineering, the Department of Mathematics, and the Center for Quantum Computing, Science \& Engineering, Michigan State University, East Lansing, MI 48824, USA, and the Midwest Quantum Collaboratory (e-mail: mhirn@msu.edu)}
\thanks{M. Perlmutter is with Department of Mathematics, University of California, Los Angeles, CA 90095, USA (e-mail: perlmutter@math.ucla.edu)}
}

%
%

\markboth{~~~~}%
{Shell \MakeLowercase{\textit{et al.}}: Bare Advanced Demo of IEEEtran.cls for IEEE Computer Society Journals}
\IEEEtitleabstractindextext{%
\begin{abstract}
Geometric deep learning has made great strides towards generalizing the design of structure-aware neural networks from traditional domains to non-Euclidean ones, giving rise to graph neural networks (GNN) that can be applied to graph-structured data arising in, e.g., social networks, biochemistry, and material science. Graph convolutional networks (GCNs) in particular, inspired by their Euclidean counterparts, have been successful in processing graph data by extracting structure-aware features. However, current GNN models are often constrained by various phenomena that limit their expressive power and ability to generalize to more complex graph datasets. Most models essentially rely on low-pass filtering of graph signals via local averaging operations, leading to oversmoothing. Moreover, to avoid severe oversmoothing, most popular GCN-style networks tend to be shallow, with narrow receptive fields, leading to underreaching. Here, we propose a hybrid GNN framework that combines traditional GCN filters with band-pass filters defined via geometric scattering. We further introduce an attention framework that allows the model to locally attend over combined information from different filters at the node level. Our theoretical results establish the complementary benefits of the scattering filters to leverage structural information from the graph, while our experiments show the benefits of our method on various learning tasks.
\end{abstract}

\begin{IEEEkeywords}
    Geometric Deep Learning, Graph Neural Networks, Geometric Scattering, Oversmoothing, Underreaching. 
\end{IEEEkeywords}}

\maketitle

\IEEEdisplaynontitleabstractindextext

%
\IEEEpeerreviewmaketitle

\section{Introduction}\label{sec:intro}

\IEEEPARstart{D}{eep} learning is typically most effective when the structure of the data can be used to design the architecture of the relevant network. For example, the design of recurrent neural networks is informed by the sequential nature of time-series data. Similarly, the design of convolutional neural networks is based in part on the fact that the pixels of an image are arranged in a rectangular grid. The success of neural networks in these, as well as many other applications, has inspired the rise of geometric deep learning\cite{bronstein2017geometric,bronstein2021geometric}, which aims to extend the success of deep learning to other forms of structured data and to develop intelligent methods for data sets that have a non-Euclidean structure.

A common approach in geometric deep learning is to model the data by a graph. In many applications, this is done by defining edges between data points that interact in a specific way, e.g., ``friends'' on a social network. In many other applications, one may construct a graph from a high-dimensional data set, either by defining an edge between each point and its $k$-nearest neighbors or by defining weighted edges via a similarity kernel.  
Inspired by the increasing ubiquity of graph-structured data, numerous recent works have shown graph neural networks (GNNs) to perform well in a variety of fields including biology, chemistry and social networks \cite{gilmer2017neural,hamilton2017inductive,kipf2016semi}. In these methods, the graph is often considered in conjunction with a set of node features, which contain ``local'' information about, e.g., each user of a social network.

One common family of tasks are  graph-level tasks, where one seeks to learn a whole-graph representation for the purposes of, e.g., predicting  properties of proteins  \cite{fout2017protein,de2018molgan,knyazev2018spectral}. Another common family, which has been the primary focus of graph convolutional networks (GCNs) \cite{kipf2016semi}, are node-level tasks such as node classification. There, the entire data set is modeled as one large graph and the network aims to produce a useful representation of each node using both the node features and the graph structure. This work is typically conducted in the semi-supervised setting where one only knows the labels of a small fraction of the nodes.

Many popular state-of-the-art GNNs essentially aim to promote similarity between adjacent nodes, which may be interpreted as a smoothing operation. While this is effective in certain settings, it can also cause a decrease in performance because of the \emph{oversmoothing} problem~\cite{li2018deeper}, where nodes become increasingly indistinguishable from one another after each subsequent layer. 
In order to partially mitigate the oversmoothing problem, many popular GNNs only use two or three layers. While this does help avoid oversmoothing, it creates a new problem, \emph{underreaching}, where the network is unable to incorporate long-range dependencies or global geometry.

Here, we propose to augment traditional GNN architectures by also including novel band-pass filters, in addition to conventional GCN-style filters\footnote{Throughout this text, we will use the term GCN to refer to the network introduced by Kipf and Welling in \cite{kipf2016semi}. We will use the term GNN to refer to  graph neural networks (spectral, convolutional, or otherwise) in general.} that essentially perform low-pass filtering \cite{nt2019revisiting}, in order to extract richer representations for each node. This approach is based on the geometric scattering transform~\cite{gama2019diffusion,gao2019geometric,zou2020graph}, whose construction is inspired by the Euclidean scattering transform introduced by Mallat in \cite{mallat2012group}, and utilizes iterative cascades of graph wavelets \cite{hammond2011wavelets,coifman2006diffusion} and pointwise nonlinear activation functions to produce useful graph data representations. Notably, these representations are able to capture both high-frequency information and long-range dependencies. 

The main contribution of this work is two hybrid GNN frameworks that utilize both traditional GCN-style filters and also novel filters based on the scattering transform. This approach is based on the following simple idea: GCN-based networks are very useful, but as they aim to enforce similarity among nodes, they essentially focus on low-frequency information. Wavelets, on the other hand, are naturally equipped to capture high-frequency information. In the spatial domain, the GCN-style filters can be thought of as focusing on localized information where the wavelets are able to capture longer-range interactions. Therefore, in a hybrid network, the different channels  capture different types of information. Such a network is therefore more powerful than a network that only uses one style of filter.

We also introduce complementary GNN modules that enhance the performance of such hybrid scattering models, including (i) the graph residual convolution, an adaptive low-pass filter that corrects high-frequency noise, and (ii) an attention framework that enables the aggregation of information from different filters individually at every node. We present theoretical results, based off of a new notion of graph structural difference, that highlight the sensitivity of scattering filters to graph regularity patterns not captured by GCN filters due to underreaching.
Extensive empirical experiments demonstrate the ability of hybrid scattering networks for (transductive) semi-supervised node classification, to (i) alleviate oversmoothing and (ii) generalize to complex (low-homophily) datasets.
Moreover, we also show that our framework translates well to inductive graph-level tasks.

The remainder of this paper is organized as follows. We review related work on GNN models and geometric scattering in Sec.~\ref{sec:related} and introduce important concepts that will be used throughout this work in Sec.~\ref{sec:background}. We then formulate the hybrid scattering network in Sec.~\ref{sec:hybrid}, followed by a theoretical study of the benefits of such models in Sec.~\ref{sec:theory}. In Sec.~\ref{sec:results}, we present empirical results before concluding in Sec.~\ref{sec:conclusion}.


\section{Related Work}\label{sec:related}
Theoretical analyses\cite{li2018deeper,nt2019revisiting} of GCN and related models show that they may be viewed as Laplacian smoothing operations and, from the signal processing perspective, essentially perform low-pass filters on the graph features. One approach towards addressing this problem is the graph attention network proposed by  \cite{velivckovic2018graph}, which uses self-attention mechanisms to address these shortcomings by adaptively reweighting local neighborhoods. In \cite{liao2019lanczosnet},  the authors construct a low-rank approximation of the graph Laplacian that efficiently gathers multi-scale information and demonstrate the effectiveness of their method on citation networks and the QM8 quantum chemistry dataset. In~\cite{abu2019mixhop}, the authors take an approach similar to GCN, but use multiple powers of the adjacency matrix to learn higher-order neighborhood information.
Finally, in \cite{xu2019graph} the authors used graph wavelets to extract higher-order neighborhood.

In addition to the learned networks discussed above, several works\cite{gama2019diffusion,gao2019geometric,zou2020graph} have introduced different variations of the graph scattering transform. These papers aim to extend the Euclidean scattering transform of Mallat \cite{mallat2012group} to graph-structured data and propose predesigned, wavelet-based networks. In \cite{zou2020graph,gama2019diffusion,gama2019stability,perlmutter2019understanding}, extensive theoretical studies of these networks show that they have desirable stability, invariance, and conservation of energy properties. The practical utility of these networks has been established in \cite{gao2019geometric}, which primarily focuses on graph classification, and in \cite{zou2019encoding,castro2020,bhaskar2021molecular}, which used the graph scattering transform to generate molecules.
Building off of these results, which use handcrafted formulations of the scattering transform, recent work~\cite{tong2020data} has proposed a framework for a data-driven tuning of the traditionally handcrafted geometric scattering design that maintains the theoretical properties from traditional designs, while also showing strong empirical results in whole-graph settings.


\section{Geometric Deep Learning Background}\label{sec:background}

\subsection{Graph Signal Processing}
Let $G = (V,E,w)$ be a weighted graph, characterized by a set of nodes (also called vertices) $V\coloneqq \{v_1,\dots,v_n\}$, a set of undirected edges $E\subseteq V\times V$, and a function $w : E \to (0,\infty)$ assigning positive edge weights to the edges. Let $\bX\in\R^{n\times d_X}$ be a \textit{node features matrix}. We shall interpret the $i^\text{th}$ row of $\bX$ as representing the features of the node $v_i$, and therefore, we shall denote these rows by either $\bX[v_i]\in\R^{1\times d_X}$ or $\bX_{v_i}\in\R^{1\times d_X}$. The columns of $\bX$, on the other hand, will be denoted by $\bx_i$. Each of these columns may be naturally identified with a \emph{graph signal}, i.e., a function $x_i: V\rightarrow \mathbb{R}$, $x_i(v_j)\coloneqq\bx_i[j]$. In what follows, for simplicity, we will not distinguish between the vectors $\bx_i$ and the functions $x_i$ and will refer to both as graph signals.

We define the weighted \textit{adjacency matrix} $\bW\in\R^{n\times n}$ of $G$ by
$\bW[v_i,v_j] \coloneqq w(v_i,v_j)$ if $\{v_i,v_j\}\in E$, and set all other entries to zero. We further define the \textit{degree matrix} $\bD\in\R^{n\times n}$ as the diagonal matrix
$\bD\coloneqq \diag(\operatorname{deg}(v_1),\dots, \operatorname{deg}(v_n))$
with each diagonal element $\operatorname{deg}(v_i) \coloneqq \sum_{j=1}^n \bW[v_i,v_j]$ being the \textit{degree} of a node $v_i$. In the following, we will also use the shorthand $d_{v_i}$ to denote the degree of $v_i$. We consider the \textit{combinatorial graph Laplacian} matrix $\bL\coloneqq \bD - \bW$  and the \textit{symmetric normalized Laplacian} given by
$$
    \cL \coloneqq \bD^{-1/2}\bL\bD^{-1/2} = \Id_n - \bD^{-1/2}\bW\bD^{-1/2}.
$$
It is well known that this matrix is symmetric, positive semi-definite, and admits an orthonormal basis of eigenvectors such that $\cL\bq_i=\lambda_i\bq_i.$ Therefore, we may write  
$$
    \cL = \bQ\bLambda\bQ^T = \sum_{i=1}^n \lambda_i \bq_i \bq_i^T,
$$
where $\boldsymbol{\Lambda}\coloneqq \diag(\lambda_1,\dots,\lambda_n)$ and $\bQ$ is the orthogonal matrix whose $i$-th column is $\bq_i$.

We will use the eigendecomposition of $\cL$ to define the \textit{graph Fourier transform}, with the eigenvectors $\bq_1,\dots,\bq_n$ being interpreted as Fourier modes. The notion of oscillation on irregular domains like graphs is delicate, but can be reframed in terms of increasing variation of the modes, with the eigenvalues  $0 = \lambda_1 \leqslant \lambda_2 \leqslant \dots \leqslant \lambda_n \leqslant 2$ interpreted as (squared) frequencies.\footnote{This interpretation originates from motivating the graph Fourier transform via the combinatorial graph Laplacian $\bL=\sum_{i=1}^n \tlambda_i \btq_i \btq_i^T$ with $\tlambda_i=\btq_i^T\bL\btq_i=\sum_{i=1}^n \bW[i,j](\btq_i[i]-\btq_i[j])^2$ the variation of $\btq_i$.}
The \textit{Fourier transform} of a graph signal $\bx\in \R^n$ is defined by $\bhx[i] \coloneqq \langle \bx, \bq_i \rangle$ for $1\leq i\leq n$ and the inverse Fourier transform may be computed by $\bx = \sum_{i=1}^n \boldsymbol{\hat x}[i] \bq_i$.
It will frequently be convenient to write these equations in matrix form as $\bhx = \bQ^T \bx$ and $\bx = \bQ \bhx$.

We recall that in the Euclidean setting, the convolution of two signals in the spatial domain corresponds to the pointwise multiplication of their Fourier transforms. Therefore, we may define the convolution of a signal $\bx$ with a filter $\bg$ by the rule that $\bg\star\bx$ is the unique vector such that $(\widehat{\bg\star \bx})[i] = \bhg[i]\bhx[i]$ for $1\leq i\leq n$. Applying the inverse Fourier transform, one may verify that
\begin{equation}\label{eq:gsp-conv}
    \bg\star \bx
    = \sum_{i=1}^n \bhg[i] \bhx[i] \bq_i
    = \sum_{i=1}^n \bhg[i] \boldsymbol{q}_i \bq^T_i \bx
    = \bQ \boldsymbol{\widehat{G}} \bQ^T \bx,
\end{equation}
where $\boldsymbol{\widehat{G}} \coloneqq \diag(\bhg) = \diag(\bhg[1], \dots, \bhg[n])$. Hence, convolutional graph filters can be parameterized by considering the Fourier coefficients in $\boldsymbol{\widehat{G}}$. 

\subsection{Spectral Graph Neural Network Constructions}

A \textit{graph filter} is a function $F: \mathbb{R}^{n\times d_X}\rightarrow\mathbb{R}^{n\times d_Y}$ that transforms a node feature matrix $\bX\in\mathbb{R}^{n\times d_X}$ into a new feature matrix $\bY\in\mathbb{R}^{n\times d_Y}$. GNNs typically feature several layers each of which produces a new set of features by filtering the output of the previous layer. We will usually let $\bX^0$ denote the initial node feature matrix, which is the input to the network, and let $\bX^\ell$ denote the node feature matrix after the $\ell$-th layer.

In light of Eq.~\ref{eq:gsp-conv}, a natural way to construct learned graph filters would be to directly learn the Fourier coefficients in $\bhg$. Indeed this was the approach used in the pioneering work of Bruna et al. \cite{bruna2014spectral}.  
However, this approach has several drawbacks.
First, it results in $n$ learnable parameters in each convolutional filter. Therefore, a network using such filters would not scale well to large data sets due to the computational cost. At the same time, such filters are not necessarily well-localized in space and are prone to overfitting~\cite{defferrard2016convolutional}.
Moreover, networks of the form introduced in \cite{bruna2014spectral} typically cannot be generalized to different graphs~\cite{bronstein2017geometric}. 
However, recent work%
~\cite{levie2021transferability} has shown that this latter issue can be overcome by formulating the Fourier coefficients $\bhg[i]$ as smooth functions of the Laplacian eigenvalues $\lambda_i$, $1\leq i\leq n$. In particular, this will be the case for the filters used in the networks considered in this work.

A common approach (e.g., used in~\cite{defferrard2016convolutional,kipf2016semi,susnjara2015accelerated,levie2018cayleynets,liao2019lanczosnet}) to formulate such filters is by using polynomials of the Laplacian eigenvalues to set $\bhg[i] \coloneqq \sum_{j=0}^k \theta_j \lambda_i^j$  (or equivalently  $\boldsymbol{\widehat{G}} = \sum_{j=0}^k \theta_j \boldsymbol{\Lambda}^j$) for some $k\ll n$. It can be verified~\cite{defferrard2016convolutional} that this approach yields convolutional filters that are $k$-localized in space and that can be written  as
$ 
    \bg_{\btheta} \star \bx = \sum_{j=0}^{k} \theta_j \cL^j \bx.
$ 
This reduces the number of trainable parameters in each filter from $n$ to $k+1$ and allows one to perform convolution without explicitly computing the spectral decomposition of the Laplacian, which is expensive for large graphs. 

One particularly noteworthy network that uses this method is~\cite{defferrard2016convolutional}, which writes the filters in terms of the Chebyshev polynomials defined by $T_0(x)=1$, $T_1(x)=x$ and $T_j(x)\coloneqq 2x T_{j-1}(x) - T_{j-2}(x)$.
They first renormalize the eigenvalue matrix $\boldsymbol{\tilde{\Lambda}}\coloneqq 2\bLambda/\lambda_{n}-\Id_n$
and then define $\boldsymbol{\widehat{G}} \coloneqq \sum_{j=0}^{k} \theta_j T_j(\boldsymbol{\tilde{\Lambda}})$. Letting $\boldsymbol{\tilde{\mathcal{L}}}\coloneqq 2\boldsymbol{\mathcal{L}}/\lambda_{n}-\Id_n$, yields
\begin{equation}\label{eq:cheb-conv}
    \bg_{\btheta} \star \bx = \sum_{j=0}^{k} \theta_j T_j(\boldsymbol{\tilde{\mathcal{L}}}).
\end{equation}

\subsection{Graph Convolutional Networks}\label{sec:gcn}
One of the most widely used GNNs is the 
Graph Convolutional Network (GCN)~\cite{kipf2016semi}. This network is derived from the Chebyshev construction\cite{defferrard2016convolutional} mentioned above by setting $k=1$ in Eq.~\ref{eq:cheb-conv}
and approximating $\lambda_n\approx 2$, which yields
\begin{align*}
    \bg_{\theta_0,\theta_1} \star \bx
    & \approx \theta_0 \bx + \theta_1 (\cL-\Id_n) \bx \\
    & = \theta_0 \bx - \theta_1 \bD^{-1/2} \bW \bD^{-1/2} \bx.
\end{align*}
To further reduce the number of trainable parameters, the authors then set $\theta\coloneqq \theta_0=-\theta_1$. The resulting convolutional filter has the form
\begin{equation}\label{eq:gcn-conv}
    \boldsymbol{g}_\theta \star \boldsymbol{x} = \theta \left(\Id_n + \boldsymbol{D}^{-1/2} \boldsymbol{W} \boldsymbol{D}^{-1/2}\right) \boldsymbol{x}.
\end{equation}
One may verify that $\Id_n + \boldsymbol{D}^{-1/2} \boldsymbol{W} \boldsymbol{D}^{-1/2}=2\Id_n-\boldsymbol{\mathcal{L}}$, and therefore, Eq.~\ref{eq:gcn-conv} essentially corresponds to setting $\boldsymbol{\hat{g}}[i] \coloneqq \theta (2 - \lambda_i)$ in Eq.~\ref{eq:gsp-conv}. The eigenvalues of $\Id_n + \boldsymbol{D}^{-1/2} \boldsymbol{W} \boldsymbol{D}^{-1/2}$ take values in $[0,2].$ Thus, to avoid  vanishing or exploding gradients, the authors use a renormalization trick
\begin{equation}\label{eq:renorm}
    \Id_n + \boldsymbol{D}^{-1/2} \boldsymbol{W} \boldsymbol{D}^{-1/2} \rightarrow \boldsymbol{\tilde D}^{-1/2} \boldsymbol{\tilde W} \boldsymbol{\tilde D}^{-1/2},
\end{equation}
where $\boldsymbol{\tilde W}\coloneqq \Id_n + \boldsymbol{W}$ and $\boldsymbol{\tilde D}$ is a diagonal matrix with $\boldsymbol{\tilde D}[v_i,v_i] \coloneqq \sum_{j=1}^n \boldsymbol{\tilde W}[v_i,v_j]$ for $i\in[n]$.
Setting
$
    \boldsymbol{A} \coloneqq \boldsymbol{\tilde D}^{-1/2} \boldsymbol{\tilde W} \boldsymbol{\tilde D}^{-1/2},
$
and using multiple channels we obtain a layer-wise propagation rule of the form
$
 \bx_j^\ell = \sigma \big( \sum_{i=1}^{N_{\ell-1}} \theta_{ij}^\ell \bA \bx_i^{\ell-1} \big),
$
where $N_\ell$ is the number of channels used in the $\ell$-th layer and $\sigma(\cdot)$ is an elementwise nonlinearity. In matrix form we write
\begin{equation}\label{eq:gcn}
    \bX^\ell = F_{\gcn}\left(\bX^{\ell-1}\right) = \sigma \left( \bA \bX^{\ell-1} \bTheta^\ell \right).
\end{equation}
We interpret the matrix $\bA$ as computing a localized average of each channel $\bx_i^{\ell-1}$  around each mode and the matrix  $\bTheta$ as sharing information across channels. This filter can also be observed at the node level as
\begin{equation*}
    \bX^{\ell}[v] = \sigma\left(\sum_{w\in\nbh_{\underline{v}}} \frac{1}{\sqrt{(d_v+1)(d_w+1)}} \bX^{\ell-1}[w] \bTheta^\ell \right),
\end{equation*}
where $\nbh_v$ denotes the one-step neighborhood of node $v$ and $\nbh_{\underline{v}} \coloneqq \nbh_v \cup \{v\}$.
This process can be split into three steps:
\begin{subequations}\label{eq:gcn-node-}
    \begin{align}
        & \bX_a^\ell[w] = \bX^{\ell-1}[w] \bTheta^\ell \text{ for all } w\in\nbh_{\underline{v}} \label{eq:gcn-node-a} \\
        & \bX_b^\ell[v]  = \sum_{w\in\nbh_{\underline{v}}} \frac{1}{\sqrt{(d_v+1)(d_w+1)}} \bX_a^{\ell}[w] \label{eq:gcn-node-b} \\
        & \bX_{c}^\ell[v] = \sigma\left(\bX_b^\ell[v]\right) \label{eq:gcn-node-c},
    \end{align}
\end{subequations}
which we refer to as the \textit{transformation} step (Eq.~\ref{eq:gcn-node-a}), the \textit{aggregation} step (Eq.~\ref{eq:gcn-node-b}) and the \textit{activation} step (Eq.~\ref{eq:gcn-node-c}).

As discussed earlier, the GCN filter described above may be viewed as a low-pass filter that suppresses high-frequency information. For simplicity, we focus on the convolution in Eq.~\ref{eq:gcn-conv} before the renormalization.  This convolution essentially corresponds to point-wise Fourier multiplication by~$\bhg[i]=\theta(2-\lambda_i)$, which is strictly decreasing in~$\lambda_i$. Therefore, repeated applications of this filter effectively zero-out the higher frequencies. This is consistent with the oversmoothing problem discussed in~\cite{li2018deeper}.

\subsection{Graph Attention Networks}

Another popular network that is widely used for node classification tasks is the graph attention network (GAT)~\cite{velivckovic2018graph}, which uses an attention mechanism to guide and adjust the aggregation of features from adjacent nodes. First, the node features $\bX^{\ell-1}$ are linearly transformed to $\boldsymbol{\bar X}^\ell\coloneqq \bX^{\ell-1} \bTheta$ using a learned weight matrix $\bTheta \in \mathbb{R}^{d \times d'}$. Then, the aggregation coefficients are learned via
\begin{equation*}
    \alpha_{v\leftarrow u} = \frac{\exp(\operatorname{LeakyReLU}({\left[\boldsymbol{\bar X}^\ell[v]\mathbin\Vert\boldsymbol{\bar X}^\ell[u] \right] \ba})}{\sum_{w \in \nbh_{\underline{v}}} \exp(\operatorname{LeakyReLU}(\left[\boldsymbol{\bar X}^\ell[v]\mathbin\Vert \boldsymbol{\bar X}^\ell[w] \right] \ba)},
\end{equation*}
where $\ba \in \R^{2d'}$ is a shared attention vector and $\Vert$ denotes horizontal concatenation. The output feature corresponding to a single attention head is given by $\bX^\ell[v] = \sigma(\sum_{u \in \nbh_{v}}  \alpha_{v\leftarrow u}\boldsymbol{\bar X}^\ell[u])$. 
To increase the expressivity of this network, the authors then use a  multi-headed attention mechanism, with $\Gamma$ heads, to generate concatenated features
\begin{equation}\label{eq:gat}\textstyle
    \bX^\ell[v] = \mathbin\Vert_{\gamma=1}^\Gamma \sigma\left({\sum_{u\in \nbh_{\underline{v}}}} \alpha_{v\leftarrow u}^\gamma \boldsymbol{\bar X}_\gamma^\ell[u]\right).
\end{equation}

\subsection{Challenges in Geometric Deep Learning}\label{sec:challenges}

Many GNN models, including GCN~\cite{kipf2016semi} and GAT~\cite{velivckovic2018graph}, are subject to the so-called \textit{oversmoothing} problem~\cite{li2018deeper}, caused by aggregation steps (such as Eq.~\ref{eq:gcn-node-b}) that essentially consist of localized averaging operations. As discussed in Sec.~\ref{sec:gcn} and also \cite{nt2019revisiting}, from a signal processing point of view, this corresponds to a \textit{low-pass filtering} of graph signals. Moreover, as discussed in \cite{barcelo2020logical}, these networks are also subject to \textit{underreaching}. Most GNNs (including GCN and GAT) can only relate information from nodes within a distance equal to the number of GNN layers, and because of the oversmoothing problem, they typically use a small number of layers in practice. Therefore, the oversmoothing and underreaching problems combine to significantly limit the ability of GNNs to capture long-range dependencies. In Sec.~\ref{sec:hybrid}, we will introduce a hybrid network, which aims to address these challenges by using both GCN-style channels and channels based on the geometric scattering transform discussed below. 

\subsection{Geometric Scattering}

In this section, we review the geometric scattering transform constructed in \cite{gao2019geometric} for graph classification and show how it may be adapted for node-level tasks. As we shall see, this node-level geometric scattering will address the challenges discussed above in Sec.~\ref{sec:challenges}, by using band-pass filters that capture high-frequency information and have wider receptive fields.

The geometric scattering transform uses wavelets based upon raising the  \textit{lazy random walk} matrix
$$
    \boldsymbol{P} \coloneqq \frac{1}{2} \big( \Id_n + \boldsymbol{W} \boldsymbol{D}^{-1} \big),
$$
to dyadic powers $2^j$, which can be interpreted as differing degrees of resolution.
The right eigenvectors and eigenvalues of $\bP$ are $\bu_i \coloneqq \bD^{1/2}\bq_i$ and $\omega_i \coloneqq 1 - \lambda_i/2, 1\leq i\leq n$, respectively.
Entrywise, we note that 
\begin{equation}\label{eq:P-node}
    (\bP\bX)[v] = \frac{1}{2} \bX[v] + \frac{1}{2} \sum_{w\in\nbh_v} d_w^{-1} \bX[w].
\end{equation}
Thus, $\bP$ may be viewed as a localized averaging operation operator analogous to those used in, e.g., GCN, and the powers $\bP^{2^j}$ may be viewed as low-pass filters which suppress high-frequencies.
In order to better retain this high-frequency information, we define multiscale \textit{graph diffusion wavelets} by subtracting these low-pass filters at different scales~\cite{coifman2006diffusion}. Specifically, for $k\in\mathbb{N}_0$, we define a wavelet  $\boldsymbol{\Psi}_k\in\mathbb{R}^{n\times n}$ at scale $2^k$ by
\begin{equation}\label{eq:wavelet}
    \bPsi_0 \coloneqq \Id_n - \bP, \quad
    \bPsi_k \coloneqq \bP^{2^{k-1}} - \bP^{2^k}, \quad k\geq 1.
\end{equation}
We may interpret each $\boldsymbol{\Psi}_k$ as capturing information at a different frequency band. From a spatial perspective, we may view $\bPsi_k$ as encoding information on how a $2^k$-step neighborhood differs from a smaller $2^{k-1}$-step one. Such wavelets are usually organized in a filter bank \begin{equation}\label{eqn: W}\{\boldsymbol{\Psi}_k, \boldsymbol{\Phi}_K\}_{0\leq k\leq K},\end{equation} along with a low-pass filter $\boldsymbol{\Phi}_K\coloneqq \boldsymbol{P}^{2^K}$.
 Proposition 2.2 of ~\cite{perlmutter2019understanding} (restated in here as Proposition \ref{prop:frame}) shows that this  filter bank is a self-adjoint, non-expansive frame on a suitably weighted inner-product space.

\begin{figure}[!t]
\centering
\includegraphics[width=\columnwidth]{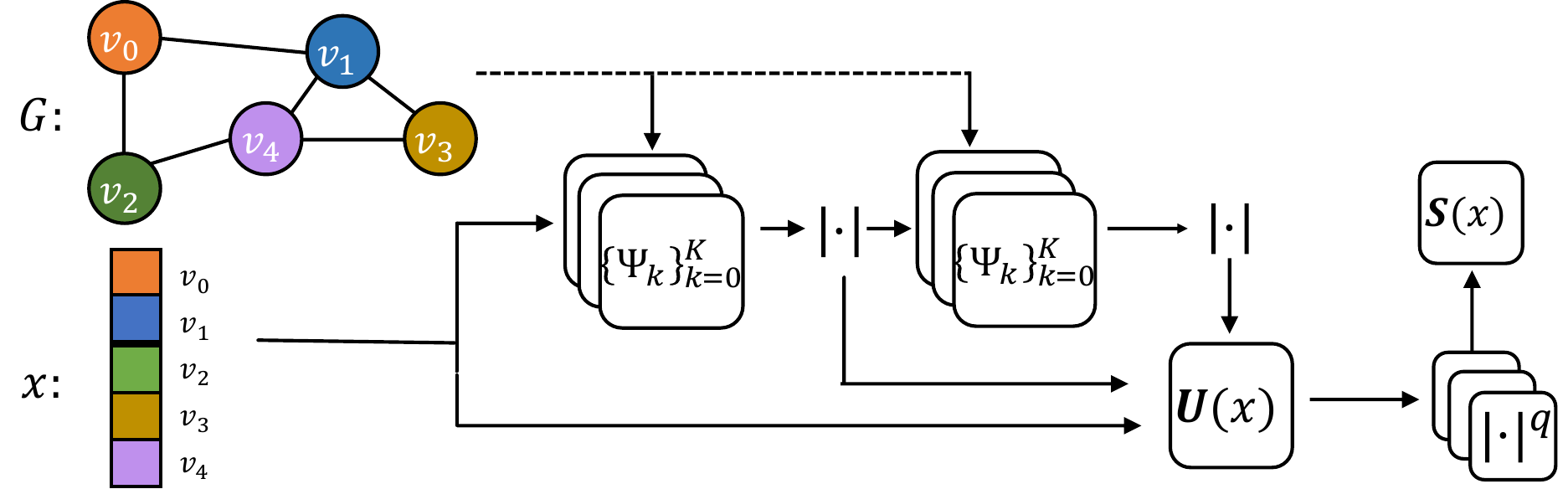}
\caption{Illustration of two-layer geom. scattering at the node level ($\boldsymbol{U}(\boldsymbol{x}) = \{\boldsymbol{U}_p \boldsymbol{x} : p \in \mathbb{N}_0^{m}, m = 0,1,2\}$) and at the graph level ($\boldsymbol{S}(\boldsymbol{x}) = \{\boldsymbol{S}_{p,q} \boldsymbol{x} : q \in \mathbb{N}, p \in \mathbb{N}_0^{m}, m = 0,1,2\}$), extracted according to the wavelet cascade in Eq.~\ref{eq:wavelet}-\ref{eq:scat-graph}. }
\label{fig:GeoSct_illustration}
\end{figure}

The geometric scattering transform is a multi-layered architecture that iteratively applies wavelet convolutions and nonlinear activation functions in an alternating cascade as illustrated in Fig.~\ref{fig:GeoSct_illustration}. It is parameterized by paths $p \coloneqq (k_1, \dots, k_m)$. Formally, we define
\begin{equation}
    \label{eq_original node.scattering}
    \boldsymbol{U}_p \boldsymbol{x} \coloneqq \boldsymbol{\Psi}_{k_m} \circ \sigma \circ \boldsymbol{\Psi}_{k_{m-1}} \cdots \sigma \circ \boldsymbol{\Psi}_{k_2} \circ \sigma \circ \boldsymbol{\Psi}_{k_1}\boldsymbol{x},
\end{equation}
where $\sigma$ is a nonlinear activation function.\footnote{In a slight deviation from previous work, here $\boldsymbol{U}_p$ does not include the outermost nonlinearity in the cascade.} We note that in other work focusing on graph-classification, the scattering features are frequently aggregated into $q^{\text{th}}$-order moments,
\begin{equation}\label{eq:scat-graph} \textstyle
    \boldsymbol{S}_{p,q} \boldsymbol{x} \coloneqq \sum_{i=1}^n \vert (\bU_p \bx) [v_i] \vert^q.
\end{equation} We also note that the nonlinearity $\sigma$ might vary in each step of the cascade. However, we will suppress this possible dependence to avoid cumbersome notation. We also note that in our theoretical results, if we assume, for example, that $\sigma$ is strictly increasing, this assumption is intended to apply to all nonlinearities used in the cascade. In our experiments, we use the absolute value, i.e., $\sigma(\bx[v_i])=\lvert\bx[v_i]\rvert$.

The original formulations of geometric scattering were fully designed networks without any learned convolutions between channels. Here, we will incorporate learning by defining the following scattering propagation rule similar to the one used by GCN:
\begin{equation}\label{eq:scat-node}
    \bX^\ell = F_{p\psct}\left(\bX^{{\ell-1}}\right) = \sigma \left( \bU_{p} \big(\bX^{\ell-1} \bTheta^\ell\big)\right).
\end{equation}
Analogously to GCN, we note that for $v\in V$, the scattering propagation rule can be decomposed into three steps:
\begin{subequations}\label{eq:scat-node-}
    \begin{align}
        & \bX_{a'}^\ell[v] = \bX^{\ell-1}[v] \bTheta^\ell \label{eq:scat-node-a} \\
        & \bX_{b'}^\ell[v]=\left(\bU_p(\bX_{a'}^{\ell})\right)[v] \label{eq:scat-node-b} \\
        & \bX_{c'}^\ell[v]=\sigma(\bX_{b'}^\ell[v])^q \label{eq:scat-node-c}.
    \end{align}
\end{subequations}
Importantly, we note that the scattering transform addresses the underreaching problem as wavelets $\boldsymbol{\Psi}_k=\bP^{2^{k-1}}-\bP^{2^{k}}$ that are leveraged in the aggregation step (Eq.~\ref{eq:scat-node-b}) have larger receptive fields than most traditional GNNs. However, scattering does not result in oversmoothing because the subtraction results in band-pass filters rather than low-pass filters. In this manner, the scattering transform addresses  the challenges discussed in the previous subsection.

\section{Hybrid Scattering Networks}\label{sec:hybrid}

Here, we introduce two hybrid networks that combine aspects of GCN and the geometric scattering transform discussed in the previous section. Our networks will use  both low-pass and band-pass filters in different channels to capture different types of information. As a result, our hybrid networks will have greater expressive power than either traditional GCNs, which only use low-pass filters, or a pure scattering network, which only uses band-pass filters.

In our low-pass channels, we use modified GCN filters, which are similar to those used in Eq.~\ref{eq:gcn} but use higher powers of $\bA$ and include bias terms. Specifically, we use a channel update rule of the form
\begin{equation}\label{eq:gcn-mod}
    \bX_{\text{low},i}^\ell \coloneqq \sigma \left( \bA^{r_i} \bX^{\ell-1} \bTheta_{\text{low},i}^\ell + \bB_{\text{low},i}^\ell \right)
\end{equation}for $1\leq i \leq C_{\text{low}}$. We note, in particular, that the use of higher powers of $\bA$ enables a wider receptive field (of radius $r_i$), without increasing the number of trainable parameters (unlike in GCN).

Similarly, in our band-pass channels, we use a version of Eq.~\ref{eq:scat-node},  with an added bias term, and our update rule is 
\begin{equation}\label{eq:scat-mod}
    \bX_{\text{band},i}^\ell \coloneqq \sigma \left( \bU_{p_i} \big(\bX^{\ell-1} \bTheta_{\text{band},i}^\ell\big) + \bB_{\text{band},i}^\ell \right)
\end{equation}
 for $1\leq i\leq C_{\text{band}}$. Here, similarly to Eq.~\ref{eq_original node.scattering}, $p_i$ is a path that determines the cascade of wavelets used in the $i$-th channel. 
%
%
\\

\noindent\textbf{Aggregation module.}
Each \textit{hybrid layer} uses $C_{\text{low}}$ low-pass and $C_{\text{band}}$ band-pass channels, described in Eq.~\ref{eq:gcn-mod} and Eq.~\ref{eq:scat-mod}, to transform the node features $\bX^{\ell-1}$. The resulting 
$
    \{\bX_{\text{low},i}^\ell\}_{i=1}^{C_{\text{low}}} \text{ and } \{\bX_{\text{band},i}^\ell\}_{i=1}^{C_{\text{band}}}
$
are aggregated to new $d_\ell$-dimensional representations $\bX^\ell$ via an \textit{aggregation module} such as those discussed in Sections \ref{sec:sc-gcn} and \ref{sec:sc-atn}. 
\begin{equation}\label{eq:agg}
    \bX^\ell \coloneqq \operatorname{AGG}\left(\{\bX_{\text{low},i}^\ell\}_{i=1}^{C_{\text{low}}}, \{\bX_{\text{band},i}^\ell\}_{i=1}^{C_{\text{band}}}\right).
\end{equation}
\\

\noindent\textbf{Graph Residual Convolution.}
After aggregating the outputs of the low-pass  channels 
and band-pass channels in Eq.~\ref{eq:agg}, we apply the \emph{graph residual convolution}, which acts as a low-pass filtering and aims to eliminate any high-frequency noise introduced by the scattering channels. This noise can arise, e.g., if there are various different label rates in different graph substructures. In this case, the scattering features may learn the difference between labeled and unlabeled nodes and  thereby produce high-frequency noise.

This filter uses a modified diffusion matrix given by 
$$
    \boldsymbol{A}_{\text{res}}(\alpha) = \frac{1}{\alpha+1} (\boldsymbol{I}_n+\alpha \boldsymbol{W} \boldsymbol{D}^{-1}),
$$
where the hyperparameter $\alpha$ determines the magnitude of the low-pass filtering. Choosing $\alpha = 0$ yields the identity (no filtering), while $\alpha \rightarrow \infty$ results in the random walk matrix $\boldsymbol{R} \coloneqq \boldsymbol{W} \boldsymbol{D}^{-1}$. Thus, $\bA_{\text{res}}$ can be interpreted as lying between a completely lazy random walk that never moves and a non-resting one $\boldsymbol{R}$ that moves at every time step.

The full residual convolution update rule is given by
\begin{equation*}
    \bX^{\ell+1} \coloneqq \bA_{\text{res}}(\alpha) \bX^\ell \bTheta_{\text{res}} + \bB_{\text{res}}.
\end{equation*}
The multiplication with  $\bTheta_{\text{res}}$ corresponds to a fully connected layer applied to the output from Eq.~\ref{eq:agg} (after filtering by $\bA_{\text{res}}$) with each neuron learning a linear combination of the signals output by the aggregation module.


\subsection{Scattering GCN}\label{sec:sc-gcn}

The Scattering GCN (Sc-GCN) network was first introduced in \cite{min2020scattering}. Here, the  aggregation module concatenates the filter responses horizontally yielding wide node representations $\bX^\ell$ of the form
\begin{equation}\label{eq:concat}
    [\bX_{\text{low},1}^\ell \mathbin\Vert \dots \mathbin\Vert \bX_{\text{low},C_{\text{low}}}^\ell \mathbin\Vert \bX_{\text{band},1}^\ell \mathbin\Vert \dots \mathbin\Vert \bX_{\text{band},C_{\text{band}}}^\ell].
\end{equation}
Sc-GCN then learns relationships between the channels via the graph residual convolution. \\

\begin{figure}[!t]
\centering
\includegraphics[width=\columnwidth]{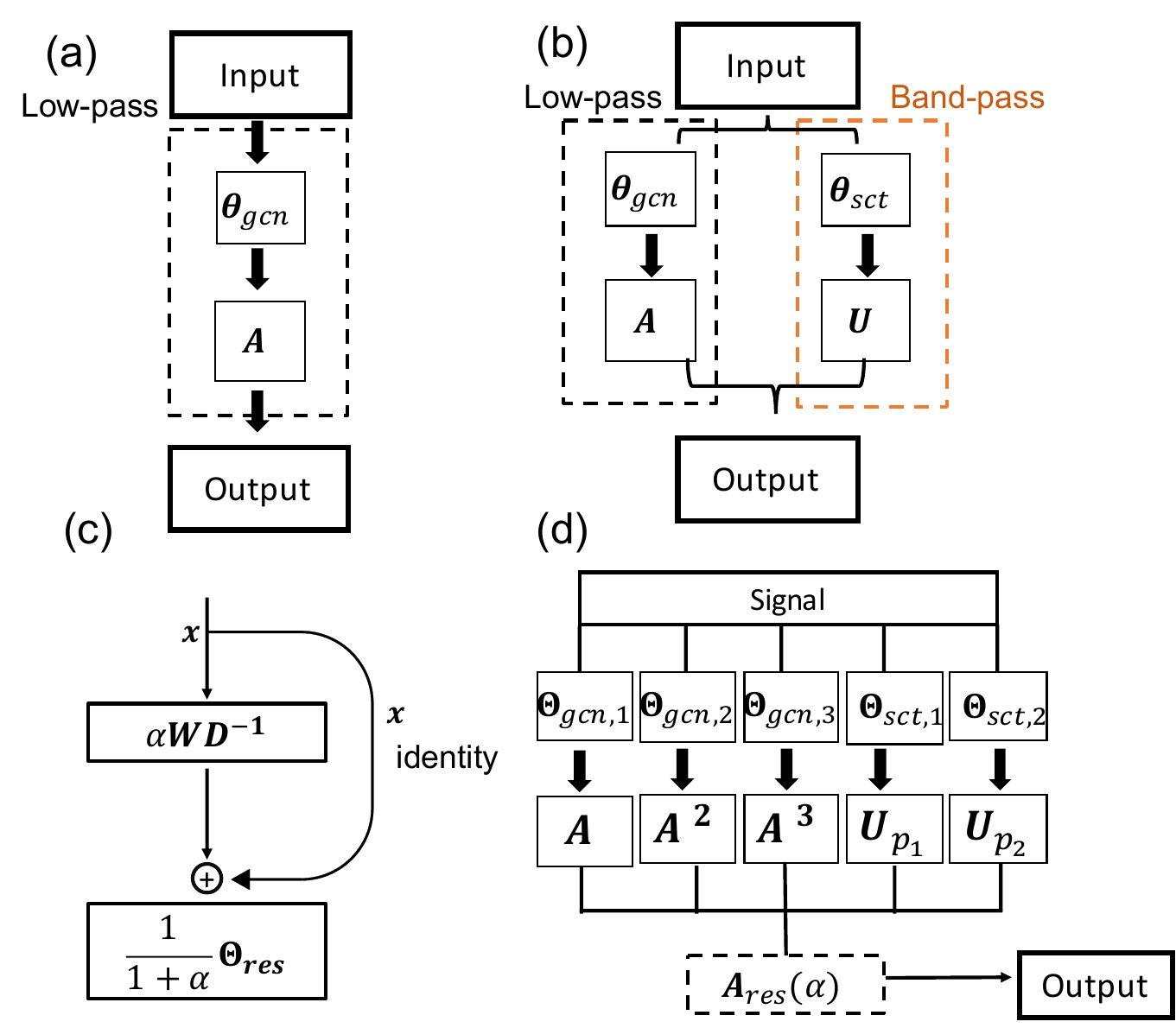}
\caption{(a) \& (b) comparison between GCN and our Sc-GCN: we add band-pass channels to collect different frequency components; (c) graph residual convolution layer; (d) Sc-GCN combines five network channels, followed by a graph residual convolution.}
\label{fig:Sc-GCN}
\end{figure}

\noindent\textbf{Configuration.}
The primary goal of Sc-GCN is to alleviate oversmoothing in popular semi-supervised node-level tasks on, e.g., citation networks. As regularity patterns in such datasets are often dominated by low-frequency information such as inter-cluster node-similarities, we choose our parameters to focus on low-frequency information. We use three low-pass filters, with  receptive fields of size (or radius) $r_k=1,2,3$,
and two band-pass filters.
We use $\sigma(\cdot) \coloneqq \vert\cdot\vert$ as our nonlinearity in all steps except the outermost nonlinearity. Inspired by the aggregation step in classical geometric scattering\cite{gao2019geometric}, for the outermost nonlinearity, we additionally apply the $q^{\text{th}}$ power at the node level, i.e., $\sigma(\cdot) \coloneqq \vert\cdot\vert^q$.

The paths $p$ and the parameter $\alpha$ from the graph residual convolution are tuned as hyperparameters of the network.

\subsection{Geometric Scattering Attention Network}\label{sec:sc-atn}

An important observation in Sc-GCN above is that the model decides globally about how to combine different channel information. The network first concatenates all of the  features from the low-pass and band-pass channels in Eq.~\ref{eq:concat} and then combines these features via multiplication with the weight matrix $\bTheta_{\text{res}}$. However, for complex tasks or datasets, important regularity patterns may vary significantly across different graph regions. In such settings, a model should ideally attend locally over the aggregation and adaptively combine filter responses at different nodes.

This observation inspired the design of the Geometric Scattering Attention Network (GSAN)~\cite{min2021geometric}. Drawing inspiration from ~\cite{velivckovic2017graph}, GSAN uses an aggregation module based on a multi-head node-level attention framework. However, the attention mechanism used in GSAN differs from \cite{velivckovic2017graph} by attending over the combination of the different filter responses rather than over the combination of node features from neighboring nodes.  
We will first focus on the processing performed independently by each attention head and then discuss a multi-head configuration.
\begin{figure}[!t]
\centering
\includegraphics[width=0.9\columnwidth]{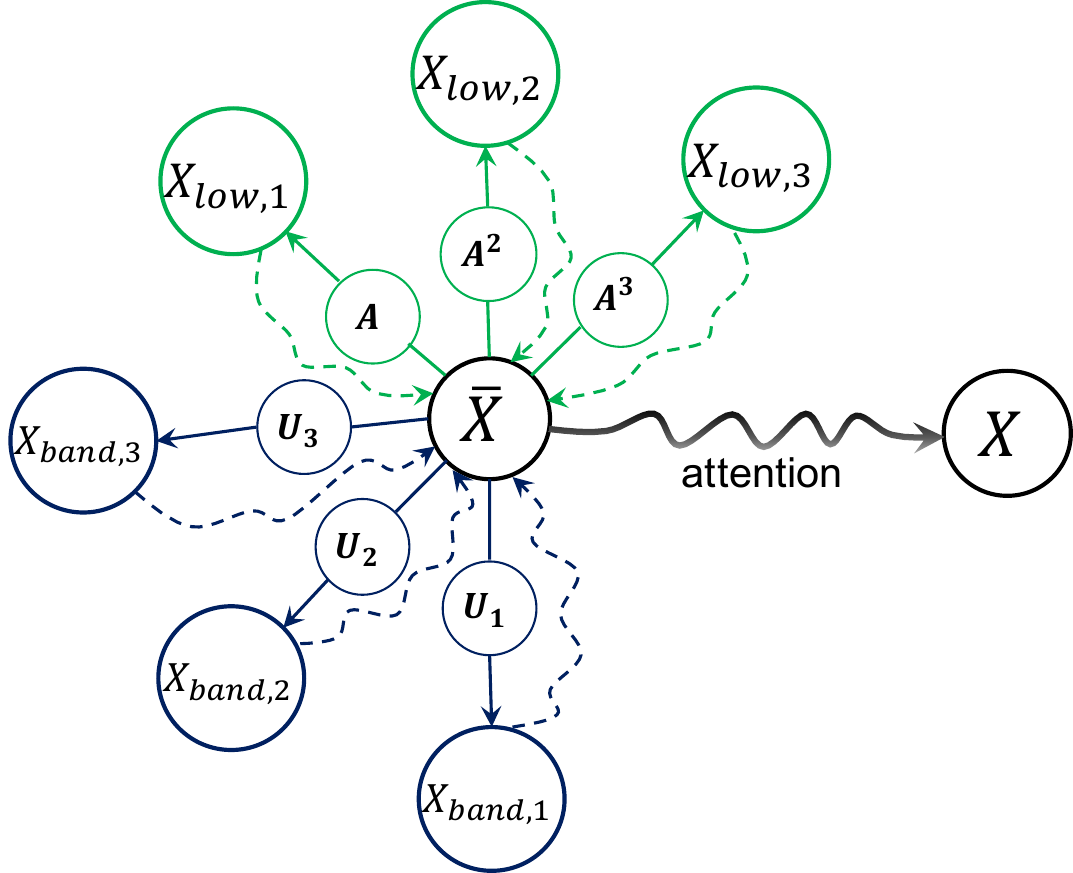}
\caption{Illustration of the proposed scattering attention layer.}
\label{fig:atten}
\end{figure}

    In a slight deviation from Sc-GCN, the weight matrices in Eq.~\ref{eq:gcn-mod} and Eq.~\ref{eq:scat-mod} are shared across the filters of each attention head. Therefore, both aggregation steps (Eq.~\ref{eq:gcn-node-b} and Eq.~\ref{eq:scat-node-b}) take the same input denoted by $\boldsymbol{\bar X}^\ell\coloneqq \bX^{\ell-1}\bTheta^\ell$. Next, we define $\boldsymbol{\bar{X}}_{\low,i}\coloneqq \bA^{r_i}\boldsymbol{\bar{X}}^\ell$ and $\boldsymbol{\bar{X}}_{\band,i}\coloneqq \vert \bU_{p_i}(\boldsymbol{\bar{X}}^\ell) \vert$ to be the outputs of the aggregation steps, Eq.~\ref{eq:gcn-node-b} and Eq.~\ref{eq:scat-node-b}, with the bias terms set to zero. 
We then compute \textit{attention score vectors} $\be_{\text{low},i}^{\ell}, \be_{\text{band},i}^{\ell} \in \R^n$ that will be used to determine the importance of each of the filter responses for every single node. We calculate
\begin{align*}
     \be_{\low,i}^{\ell} &= \operatorname{LeakyReLU} \left(\left[ \boldsymbol{\bar X}^{\ell} \Vert \boldsymbol{\bar{X}}_{\low,i}^{\ell} \right] \ba \right),
\end{align*}
with analogous $\be_{\text{band},i}^{\ell}$ and $\ba \in \R^{2 d_{\ell}}$ being a learned \textit{attention vector} that is shared across all filters of the attention head. These attention scores are then normalized across all filters using the softmax function. Specifically, we define 
\begin{align*}
     \balpha_{\low,j}^{\ell}
     &\coloneqq \frac{\exp(\be_{\low,j}^{\ell})}{\sum_{i=1}^{C_{\low}} \exp(\be_{\low,i}^{\ell})+\sum_{i=1}^{C_{\band}} \exp(\be_{\band,i}^{\ell})},
\end{align*}
where the exponential function is applied elementwise, and define  $\balpha_{\text{band},j}^{\ell}$ analogously.
Finally, for every node, the filter responses are summed together, weighted by the corresponding (node-to-filter) attention score.
We also normalize by the number of filters $C\coloneqq C_{\text{low}} + C_{\text{band}}$, which gives
\begin{align*}
     \bX^{\ell}
     = C^{-1} \tilde\sigma\bigg(& \sum_{j=1}^{C_{\text{low}}} \balpha_{\low,j}^{\ell} \odot \boldsymbol{\bar{X}}_{\low,j}^{\ell} + \sum_{j=1}^{C_{\band}} \balpha_{\band,j}^{\ell} \odot \boldsymbol{\bar{X}}_{\band,j}^{\ell} \bigg),
\end{align*}
where $\boldsymbol{\alpha} \odot \bX\coloneqq\diag(\boldsymbol{\alpha}) \bX$ denotes the Hadamard (elementwise) product of  $\boldsymbol{\alpha}$ with each column of $\bX$. We further use $\tilde\sigma(\cdot)=\text{ReLU}(\cdot)$ in the equation above.\\

\noindent\textbf{Multi-head Attention.}
Similar to other applications of attention mechanisms~\cite{velivckovic2017graph}, we use multi-head attention to stabilize the training, thus yielding as output of the aggregation module (by a slight abuse of notation)
\begin{equation}\label{eq:multi_head}
     \bX^{\ell} = \Vert_{\gamma=1}^\Gamma  \bX_\gamma^{\ell}\left( \bTheta_{\gamma}^{\ell}, \balpha_{\gamma}^{\ell}\right) ,
\end{equation}
concatenating $\Gamma$ attention heads. As explained above, each attention head has individually trained parameters $\bTheta_\gamma^\ell$ and $\balpha_\gamma^\ell$ and outputs a filter response $\bX_\gamma^\ell$. Similar to Sc-GCN, this concatenation results in wide node representations that are further refined by the graph residual convolution. \\

\noindent\textbf{Configuration.}
For GSAN, we set $C_{\text{low}} = C_{\text{band}} = 3$, giving the model balanced access to both low-pass and band-pass information. The aggregation process of the attention layer is shown in Fig.~\ref{fig:atten}, where $\boldsymbol{U}_{1,2,3}$ represent three first-order scattering transformations with $\boldsymbol{U}_1 \boldsymbol{x} \coloneqq  \boldsymbol{\Psi}_1 x$,  $\boldsymbol{U}_2 \boldsymbol{x} \coloneqq \boldsymbol{\Psi}_2 x$ and $\boldsymbol{U}_3 \boldsymbol{x} \coloneqq \boldsymbol{\Psi}_3 x$. The number of attention heads $\Gamma$ and the parameter $\alpha$ from the graph residual convolution are tuned as hyperparameters of the network.

\section{Theory of Hybrid Scattering Networks}\label{sec:theory}



In this section, we compare the descriptive power of the scattering transform to 
a class of GNNs defined below, which we refer to as aggregate-combine GNNs (AC-GNNs). 
We note that in practice, AC-GNNs are typically implemented with relatively narrow receptive fields in order to avoid oversmoothing. However, this leads to the underreaching problem, as discussed in Sections~\ref{sec:intro} and~\ref{sec:challenges}. The scattering transform, on the other hand, is able to utilize a large receptive field while avoiding oversmoothing. Therefore, it is able to simultaneously overcome both the oversmoothing and underreaching problems.

The family of AC-GNNs includes many popular GNN architectures such as GCN~\cite{kipf2016semi}, GIN~\cite{xu2018powerful} and GraphSAGE~\cite{hamilton2017inductive}. However, importantly, we note that it does not include the scattering transform (because AC-GNNs only use local  -- one-step neighborhood -- aggregations). We also note that this definition of an AC-GNN is quite similar to analogous notions used in, e.g., \cite{barcelo2020logical,xu2018powerful}.

\begin{defn}[Aggregate-Combine GNN]\label{def:ac-gnn}
    An \textit{aggregate-combine GNN} (AC-GNN) is a network, where the update rule for each layer is defined on the node level according to
    \begin{equation}\label{eq:ac-gnn}
        F_{\ac}(\bX)_v \coloneqq \sigma\left(\com \left( f(\bX_v, d_v), \agg_{w\in\nbh_{v}} g(\bX_w, d_v, d_w) \right)\right),
    \end{equation}
    for arbitrary functions $f: \R^{1\times d} \times \N \rightarrow \R^{1\times d'}$, \newline $g: \R^{1\times d} \times \N^2 \rightarrow \R^{1\times d'}$, $\com: \R^{1\times d'}\times \R^{1\times d'} \rightarrow \R^{1\times d'}$, $\sigma:\R \rightarrow \R$ an (optional) activation function applied elementwise, and $\agg$ any function mapping a multi-set of vectors in $\R^{1\times d'}$ (i.e., $\ldcb \ba_i^\top \in \R^{1\times d'}\rdcb_{i\in\mathcal{I}}$) to a vector in $\R^{1\times d'}$. 
\end{defn}


As discussed in Section \ref{sec:gcn}, the aggregations used in many popular AC-GNNs effectively act as localized averaging operators and focus primarily on low-frequency information. Therefore, deep AC-GNNs have the undesirable property of oversmoothing the input features if too many layers are used. As a result, most AC-GNNs typically only use two or three layers in order to avoid the oversmoothing problem. To understand this at an intuitive level, we follow the lead of \cite{wang2020unifying} and consider a simplified version of GCN in which $\sigma$ is the identity and the weight matrix is given by $\bTheta=\frac{1}{2}\Id_n$ and also consider the network without the renormalization trick. In this case, the filter used in GCN is given by $\frac{1}{2}\left(\Id_n + \bD^{-1/2} \bW \bD^{-1/2}\right)$ and so we have $\bhg[i]=1-\lambda_i/2$ in Eq.~\ref{eq:gsp-conv}. This implies that if $\bx$ is \emph{any} vector which is orthogonal to the lead eigenvector, we have that the representation of $\bx$ after $\ell$ layers satisfies
\begin{equation*}
    \left\|\frac{1}{2}\left(\Id_n + \bD^{-1/2} \bW \bD^{-1/2}\right)^\ell\bx\right\|_2\leq \left(1-\frac{\lambda_2}{2}\right)^\ell\|\bx\|_2.
\end{equation*}
    Therefore, the output of a deep (simplified) GCN will converge exponentially fast to the projection of $\bx$ onto the bottom eigenspace, which is why GCNs with many layers typically achieve poor performance.
    
By contrast, the use of band-pass filters allows to incorporate a larger receptive field without essentially projecting the input features onto the bottom eigenspace. Consider for example the following result %
from \cite{perlmutter2019understanding}. 
\begin{prop}[Proposition 2.2 of \cite{perlmutter2019understanding}]\label{prop:frame}
    The wavelet filter bank $\mathcal{W}=  \{\bPsi_k, \bPhi_K\}_{k=0}^K$ introduced in Eq.~\ref{eqn: W} is a non-expansive frame with respect to the weighted norm defined by $\|\bx\|_{\bD^{-1/2}}\coloneqq\sum_{i=1}^n|\bx[i]|^2 d_{v_i}^{-1}$ whose lower-frame bound is a universal constant independent of $J$ and the graph topology. That is, there exists a universal constant $c_1>0$ such that
\begin{align*}
    c_1\|\bx\|^2_{\bD^{-1/2}}\leq  \sum_{k=0}^K &\|\bPsi_k\bx\|^2_{\bD^{-1/2}} \\  + &\|\bPhi_K\bx\|^2_{\bD^{-1/2}}\leq \|\bx\|^2_{\bD^{-1/2}}.
\end{align*}
\end{prop}
The fact that the lower bound $c_1$ does not depend on $J$ is important because the receptive field of an $L$-layer geometric scattering transform is $2^JL$.
Therefore, if we choose $J$ to be sufficiently large, the scattering transform as formulated in \cite{perlmutter2019understanding} is able to incorporate long-range interaction in the network while still preserving a substantial portion of the input signal energy.
Our construction differs slightly from \cite{perlmutter2019understanding} in that our wavelet filter bank includes only the $\bPsi_k,$ but not $\bPhi_K$. However, one may derive a result similar to Proposition \ref{prop:frame}, but where the lower bound $c_1$ depends on $\lambda_2$ (but still does not depend on $J$). We refer the reader to the proof of Proposition 4.1 of \cite{gama2018diffusion} for details.\footnote{While \cite{gama2018diffusion} does not consider weighted norms, this technicality can be readily handled  by imitating the proof of Proposition 2.2 of \cite{perlmutter2019understanding}.}.
Unlike the scattering transform, in order to avoid oversmoothing, most AC-GNNs typically use networks with small receptive fields.  While this \textit{does} help avoid the problem of oversmoothing, it unfortunately creates the problem of underreaching. In the remainder of this section, we will focus on demonstrating that this underreaching diminishes the power of the network to discriminate different nodes in certain situations. 

We will now begin our analysis of \textit{node discriminability}\footnote{Formal definitions are provided in Appendix.~\ref{sec:apx-def}, \label{apx-def}}, i.e., the ability of a network to produce different representations of two nodes $v$ and $w$ in its hidden layers in the context of the underreaching problem. 
We will let  $\nbh_{\underline{v}}^{K}$ denote the \textit{$K$-step node neighborhood}\textsuperscript{\ref{apx-def}} of a node $v$ 
(including $v$ itself), and for $V_S\subset V$ we will let  $G(V_S)$ denote the corresponding \textit{induced subgraph}.\textsuperscript{\ref{apx-def}} We will say two induced subgraphs $G(V_S)$ and $G(V_{S'})$ are \textit{isomorphic}\textsuperscript{\ref{apx-def}} if they have identical geometric structure and write
$G(V_S) \cong^\phi G(V_{S'})$
to indicate that $\phi$ is an isometry (geometry preserving map) between them.
In the definition below, we introduce a class of \textit{graph-intrinsic} node features that encode graph topology information.
We will use these features as the input  for GNN models in order to produce geometrically informed node representations.



\begin{defn}[Intrinsic Node Features]\label{def:intrinsic}
A nonzero node feature matrix $\bX$ is \textit{$K$-intrinsic} if for any $v,w\in V$ such that $G\left(\nbh_{\underline{v}}^{K}\right)$ is isomorphic to $G\left(\nbh_{\underline{w}}^{K}\right)$, we have $\bX[v]=\bX[w]$.
\end{defn}
These intrinsic node features encode important $K$-local geometric information in the sense that if $\bX[v]\neq \bX[w]$, then the $K$-step neighborhoods of $v$ and $w$ must have different geometric structure. To further understand this definition, we observe that the degree vector, or any (elementwise) function of it, is a one-intrinsic node feature matrix. Setting $\bX[v]$ to the average node degree of nodes in $\nbh_{\underline{v}}^{K-1}$ yields $K$-intrinsic node features. As a slightly more complicated example, features with $\bX[v]$ the number of triangles contained in $\nbh_{\underline{v}}^K$ are also $K$-intrinsic. Fig.~\ref{fig:intrinsic} illustrates how such node features can help distinguish nodes from different graph structures.

\begin{figure}[!t]
    \centering
    \subfloat[]{
        \includegraphics[width=0.315\columnwidth]{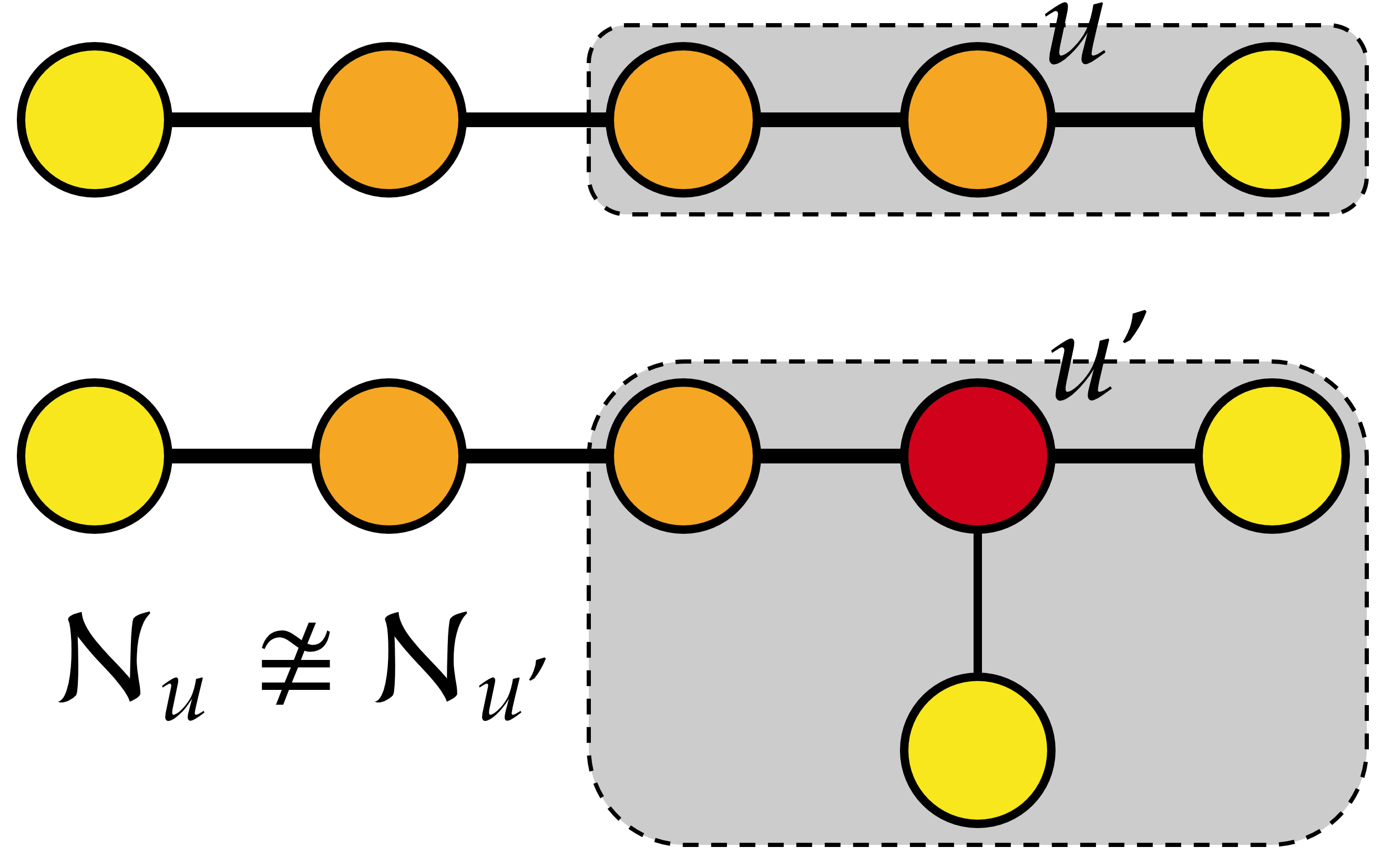}
        \label{subfig:feat a}
    }
    \subfloat[]{
        \includegraphics[width=0.315\columnwidth]{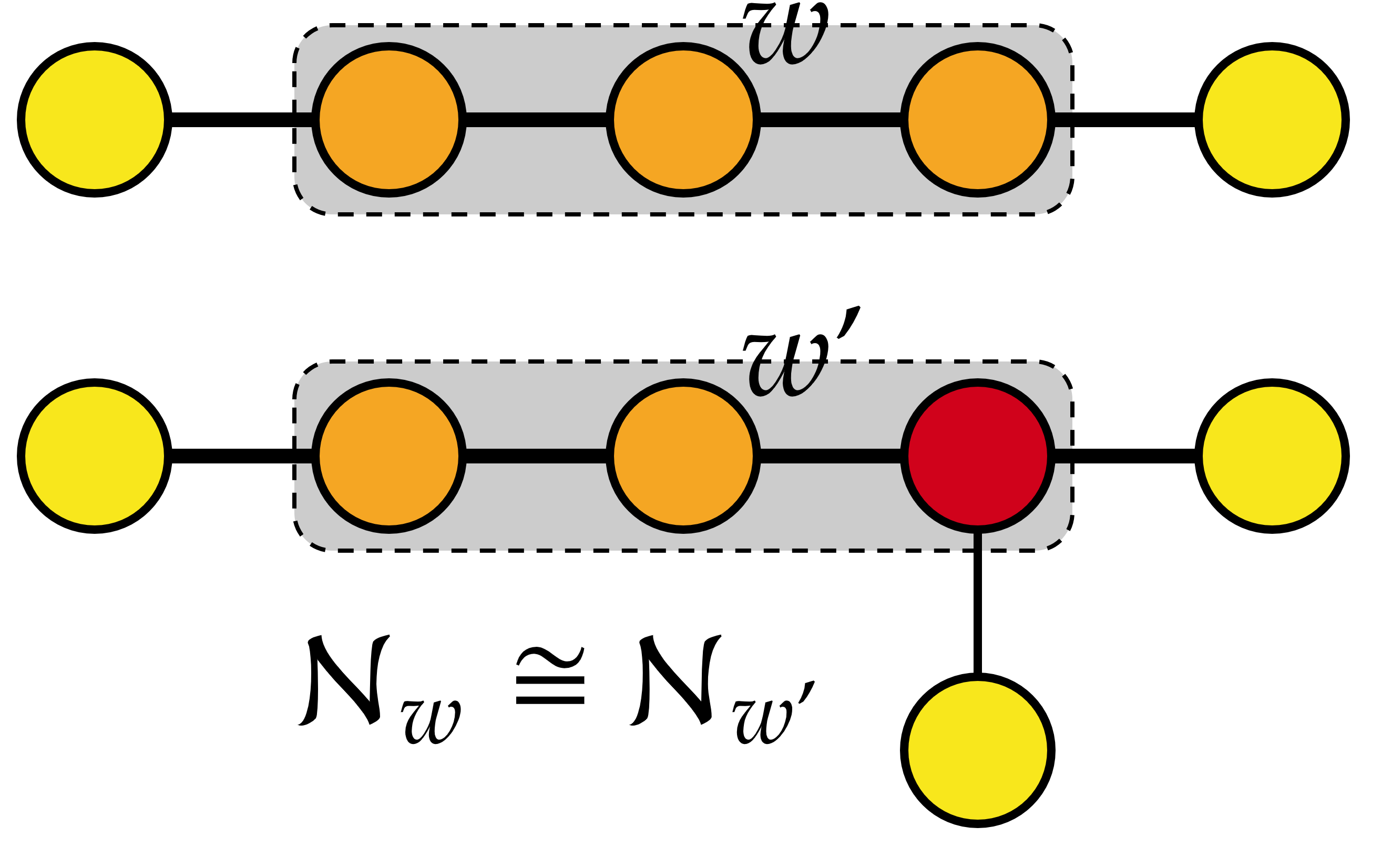}
        \label{subfig:feat b}
    }
    \subfloat[]{
      \includegraphics[width=0.315\columnwidth]{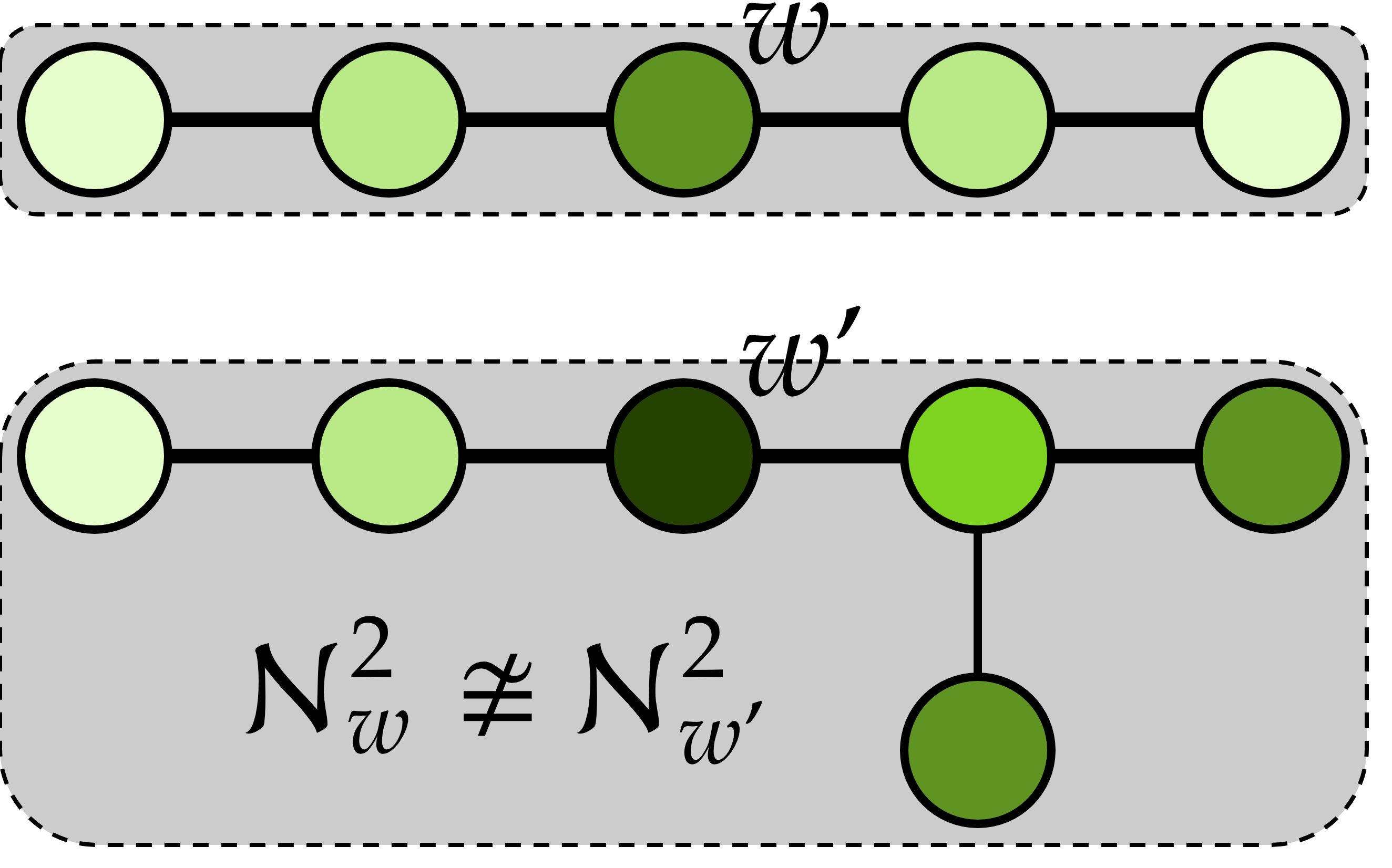}
      \label{subfig:feat c}
    }
    \caption{Intrinsic node features for two graphs (top/bottom). We compare nodes from the isomorphic horizontal path that is part of both graphs. Color coding indicates different node feature values and differs in (c). Node features are one-intrinsic (degrees) in (a) and (b), and two-intrinsic (average degree in one-step neighborhoods) in (c). Gray area represents subgraph used for the feature assignment at the indicated node.}
    \label{fig:intrinsic}
\end{figure}

As a concrete example, consider the task of predicting traffic on a network of Wikipedia articles, such as the Chameleon dataset~\cite{rozemberczki2021multi}, with nodes and edges corresponding to articles and hyperlinks.
Here, intrinsic node features provide insights into the topological context of each node.
A high degree suggests a widely cited article, which is likely to have a lot of traffic, while counting $k$-cliques or considering average degrees from nearby nodes can shed light on the density of the local graph region.

The following theorem characterizes situations where AC-GNNs cannot discriminate nodes based on the information represented in intrinsic node features. 
 It can be seen as a formal formulation of the underreaching problem~\cite{barcelo2020logical}. We note that an analogous result can also be established for graph attention networks, e.g., GAT~\cite{velivckovic2017graph} and GATv2~\cite{brody2021attentive}.


\begin{thm}\label{thm:main-1-new}
    Let $\phi : V \rightarrow V$, $v\in V$, and $K,L\in\mathbb{N}$ such that $G\left(\nbh_{\underline{v}}^{K+L}\right) \cong^\phi G\Big(\nbh_{\underline{\phi(v)}}^{K+L}\Big)$. Let $\bX^0$ be any $K$-intrinsic node feature matrix and let $\bX^L$ be the output of an $L$-layer AC-GNN, i.e., $F_{\ac}^L\circ\dots\circ F_{\ac}^1$, with each layer as in Eq.~\ref{eq:ac-gnn}.
    Then,  $\bX^L[v]=\bX^L[\phi(v)]$, and so $v$ and $\phi(v)$ cannot be discriminated based on the filtered node features $\bX^L$ from an $L$-layer AC-GNN.
\end{thm}
    Theorem~\ref{thm:main-1-new} is proved by using an induction argument to show that 
    $\bX^\ell_u=\bX^\ell_{\phi(u)}$ for all $u\in\nbh_{\underline{v}}^{L-\ell}$
    and all $0\leq \ell\leq L$. 
%
Full details can be found in the appendix~\ref{sec:apx-main-1}. 

Next, we introduce a notion of structural difference that allows us to analyze situations where networks with wide receptive fields such as scattering networks have strictly greater discriminatory power than networks with smaller receptive fields. 




\begin{defn}[Structural Difference]\label{def:struc diff}
    For two $\phi$-isomorphic induced subgraphs $G(V_S)\cong^\phi G(V_{S'})$, a \textit{structural difference} relative to $\phi$ and a node feature matrix $\bX$ is manifested at $u\in V_S$ if $\bX[u]\neq\bX[\phi(u)]$
    or $d_u\neq d_{\phi(u)}$.\footnote{Note that $d_u\neq d_{\phi(u)}$ is only relevant for nodes in the \textit{boundary}\textsuperscript{\ref{apx-def}} $\partial S$ because $d_w=d_{\phi(w)}$ for all \textit{interior}\textsuperscript{\ref{apx-def}} nodes $w\in\interior(S)$, as the degree is 1-intrinsic. For $u\in\partial S$, we further assume $d_{\phi(u)}\bX[u]\neq d_u\bX[\phi(u)]$.}
\end{defn}

Notably, if  the $K$-step neighborhoods of two nodes $v$ and $\phi(v)$ are $\phi$-isomorphic and $\bX$ is any $K$-intrinsic node feature matrix, then no structural difference relative to $\phi$ and $\bX$ can be manifested at $v$. 
Theorem~\ref{thm:main-2-new} stated below shows that a scattering network will be able to produce different representations of two nodes $v$ and $\phi(v)$ if there are structural differences in their surrounding neighborhoods, assuming (among a few other mild assumptions) that a certain pathological case is avoided.
The following notation and definition characterize this pathological situation that arises, roughly speaking, when two sums $\sum_i a_i$ and $\sum_i b_i$ coincide, even though $a_i$ and $b_i$ do not coincide. We note that although this condition seems complicated, it is highly unlikely to occur in practice.



\begin{notation}\label{notation-new}
    Let $\phi : V \rightarrow V$, $v \in V$, $D \in \mathbb{N}$, such that $G\left(\nbh_{\underline{v}}^{D}\right) \cong^\phi G\left(\nbh_{\underline{\phi(v)}}^{D}\right)$. 
    Assume that there exists at least one node in $\nbh_{\underline{v}}^{D}$ where a structural difference is manifested rel. to $\phi$ and node features $\bX$.
    We define
    $$
        V_{\text{diff}} \coloneqq \Big\{u\in \nbh_{\underline{v}}^D : \text{ struc. difference rel. to $\phi$ and $\bX$ at } u \Big\}.
    $$
    and we fix the following notations:
    \begin{enumerate}
        \item[(i)] Let $d\coloneqq\min\{d(u,v) : u\in V_{\text{diff}}\}$ and define the node set $V_{\text{diff}}^{d}\coloneqq\{u\in V_{\text{diff}} : d(u,v)=d\}$. Note that these are exactly the nodes in $\nbh_{v}^{d}$ where a structural difference is manifested relative to $\phi$ and $\bX^0$.
        \item[(ii)] Let $\tau\in\mathbb{N}$ be the number of paths of minimal length between $v$ and nodes in $V_{\text{diff}}^{d}$, and denote these by  $p^{(i)}\coloneqq \left(u_0^i,\dots,u_d^i=v\right)$ with $u_0^i\in V_{\text{diff}}^{d}$, $1\leq i\leq\tau$.
        \item[(iii)] Let $U_j\coloneqq \cup_{i=1}^\tau\{u_j^{i}\}$,
        and define the \textit{generalized path} from $V_{\text{diff}}^d$ to $v$ as $P\coloneqq (U_0,U_1,\dots,U_d)$.
        \item[(iv)] For $u\in U_0$, define $\delta_u^0 \coloneqq \bX^0[u] - \bX^0[\phi(u)]$. \\
        For $u\in U_j$, $0\leq j=\leq d$ and $\bY^j \coloneqq \bP^j \bX^0$, define
        $$
            \textstyle \delta_u^j \coloneqq \frac{1}{2} \sum_{w\in \nbh_u \cap U_{j-1}} d_w^{-1} \bY^{j-1}_w - d_{\phi(w)}^{-1} \bY^{j-1}_{\phi(w)}
        $$
        and set
    \end{enumerate}
    $$
        \Delta_j \coloneqq \{ u\in U_j : \, \exists \, w\in\nbh_{u}\cap U_{j-1} \, \text{such that} \, \delta_w^{j-1}\neq 0 \}.
    $$
\end{notation}

\begin{defn}[No Coincidental Correspondence]\label{def:nocc}
Let $\nbh_{\underline{v}}^{d}$ and $\Delta_j$ as in Notation~\ref{notation-new}.
We say that $\interior\left(\nbh_{\underline{v}}^{d}\right)$ exhibits \textit{no coincidental correspondence} (no-cc) if in each non-empty $\Delta_j$, $1\leq j \leq d$, there exists at least one $u$ such that $\delta_u^j \neq 0$.
\end{defn}
\begin{thm}\label{thm:main-2-new}
    As in Theorem~\ref{thm:main-1-new}, let $\phi : V \rightarrow V$, $v\in V$, and $K,L\in\N$ such that $G\left(\nbh_{\underline{v}}^{K+L}\right) \cong^\phi G\Big(\nbh_{\underline{\phi(v)}}^{K+L}\Big)$, and consider any K-intrinsic node feature matrix $\bX$. Further assume that there exists at least one structural difference rel. to $\phi$ and $\bX$ in $\nbh_{\underline{v}}^{K+L}$, and let $1\leq d\leq K+L$ be the smallest positive integer such that a structural difference is manifested in $\nbh_{\underline{v}}^{d}$. If the nonlinearity $\sigma$ is strictly monotonic and $\interior\left(\nbh_{\underline{v}}^{d}\right)$ exhibits no coincidental correspondence rel. to $\phi$ and $\bX$, then one can define a scattering configuration $(p,\bTheta)$ such that scattering features $F_{p\psct}(\bX)$ defined as in Eq.~\ref{eq:scat-node} discriminate $v$ and $\phi(v)$.
\end{thm}


Theorem~\ref{thm:main-2-new} provides a large class of situations where scattering filters are able to discriminate between two nodes even if networks with smaller receptive fields are not. In particular, even if the $K+L$ step neighborhood of $v$ and $\phi(v)$ are isomorphic, it is possible for there to exist a $u$ in this $K+L$ step neighborhood such that a $K$-intrinsic node feature takes different values at $u$ and $\phi(u)$. Theorem~\ref{thm:main-2-new} shows that scattering will produce differing representations of  $v$ and $\phi(v)$, while Theorem~\ref{thm:main-1-new} shows that an $L$-layer AC-GNNs like GCN will not. In the remarks below we will  discuss minor variations of Theorem~\ref{thm:main-2-new}.
In the appendix~\ref{sec:apx-mod-main-2}, we will also present a modified version of Theorem~\ref{thm:main-2-new}, where the assumption of no coincidental correspondence is replaced by an assumption on the (generalized) path between the structural differences and the investigated node.
\begin{rem}
    The absolute value operation is not monotonic. 
    Therefore, Theorem~\ref{thm:main-2-new} cannot be directly applied in this case.
    However, the above result and all subsequent results can be extended to the  case $\sigma=|\cdot|$ as long as certain pathological cases are avoided. Namely, Theorem \ref{thm:main-2-new} will remain valid as long as the features at $u$ are assumed not to be the negatives of the features at $\phi(u)$, i.e., $\bX[u]\neq-\bX[\phi(u)]$ for nodes $u\in\interior\left(\nbh_{\underline{v}}^{d}\right)$, and similarly we must modify  Definition~\ref{def:nocc} to avoid, for example, the case where $\sum_{w\in \nbh_u \cap U_{j-1}} d_w^{-1} \bX^{j-1}[w] =-\sum_{w\in \nbh_u \cap U_{j-1}} d_{\phi(w)}^{-1} \bX^{j-1}[\phi(w)]$. We note that while this condition is complex, it is rarely violated  in practice.
\end{rem}

\begin{rem}
    A result analogous to  Theorem \ref{thm:main-2-new} is also valid for any permutation of the three steps Eq.~\ref{eq:scat-node-a}-\ref{eq:scat-node-c} (even with added bias terms as in Eq.~\ref{eq:scat-mod}). In particular, it applies to the update rule
    $
        \bX^{\ell} = \sigma \left( \bU_p \big(\bX^{\ell - 1}\big) \bTheta^{\ell} + \bB^{\ell} \right).
    $
\end{rem}


We will provide a sketch of the proof of  Theorem~\ref{thm:main-2-new} here and provide details in the appendix. The key to the proof will be applying Lemma \ref{thm:lem-onion-new} stated below. 

\begin{lem}\label{thm:lem-onion-new}
    Let $\phi : V \rightarrow V$, $\tv \in V$, and $\tD \in \N$ such that $G\left(\nbh_{\underline{\tv}}^{\tD}\right) \cong^\phi G\left(\nbh_{\underline{\phi(\tv)}}^{\tD}\right)$.
    Assume there exists at least one node in $\nbh_{\underline{\tv}}^{\tD}$ where a structural difference is manifested relative to $\phi$ and 
    to $\btX$, and let $1\leq \td\leq\tD$ be the smallest positive integer such that a structural difference is manifested in $\nbh_{\underline{\tv}}^{\td}$.
    Assume that $\interior\left(\nbh_{\underline{\tv}}^{\td}\right)$ exhibits no coincidental correspondence relative to $\phi$ and $\btX$,
    and let $\tP\coloneqq \left(\tU_0,\tU_1,\dots,\tU_{\td}\right)$ be the generalized path as defined in Notation~\ref{notation-new}.
    Then, no structural difference is manifested in $\nbh_{\underline{\tv}}^{\td-j} \setminus \tU_j$, relative to $\phi$ and filtered node features $\bY^j\coloneqq \bP^j\btX$ and there exists at least one node in $u\in\tU_j$ and where a structural difference is manifested and $\delta^j_u\neq 0$.
\end{lem}


Below, we provide a sketch of the main ideas of the proof. A complete proof is provided in Appendix~\ref{sec:apx-lem-onion}.

\begin{proof}[Proof Sketch]
We use induction to show that at every step the matrix $\bP$ propagates the information about the structural difference one node layer closer towards $\tv$.
    
    By Notation~\ref{notation-new}(i), we have $\widetilde{U}_0=V_{\text{diff}}^{\tilde{d}}$. Therefore, the case $j=0$ is immediate. In the inductive step, it suffices to show
    $$
        \bY^{j+1}[u]\neq\bY^{j+1}[\phi(u)]
    $$
    and $\delta_u^{j+1} \neq 0$ for at least one $u\in \tU_{j+1}$ and
    $$
        \bY^{j+1}[w]=\bY^{j+1}[\phi(w)]
    $$
    for all $w\in\nbh_{\underline{\tv}}^{\td-(j+1)}\setminus \tU_{j+1}$ under the assumption that these results are already valid for $\bY^j$.
    This can be established by writing $
    \bP^{j+1}\btX = \bP\bP^{j}\btX = \bP \bY^j$ and using the inductive hypothesis together with the definition of $\bP$.
\end{proof}

    

We now use Lemma \ref{thm:lem-onion-new} to sketch the proof of Theorem~\ref{thm:main-2-new}. The complete proof is provided in the appendix~\ref{sec:apx-main-2}.

\begin{proof}[Proof Sketch for Theorem~\ref{thm:main-2-new}]
    We need to show that we can choose the parameters in a way that guarantees $F_{p\text{-}sct}(\bX)[v]\neq F_{p\text{-}sct}(\bX)[\phi(v)]$. For simplicity, we set $\bTheta=\Id_n$. 
    In this case, since $\sigma$ is strictly monotonic, and therefore injective, it suffices to show that we can construct $p$ with 
    \begin{equation}\label{eqn: Us are different-sketch-new}
        \bU_p(\bX)[v]\neq \bU_p(\bX)[\phi(v)].
    \end{equation}
    Using binary expansion, we may choose $k_1,\ldots,k_m\in \mathbb{N}_0$, $k_i<k_{i+1}$, such that $d=2^{k_1}+ \ldots + 2^{k_m}$
    and set $p\coloneqq(k_1,\ldots,k_m)$. Given $p$, we define truncated paths  $p_{:i}\coloneqq (k_1,\ldots,k_i)$
    and let $t_i\coloneqq\sum_{j=1}^i 2^{k_j}$ for $1\leq i \leq m.$ We also let 
    $p_{:0}$  be the empty path of length 0 and set $t_0=0.$ 
    
    Recalling the generalized path $P=(U_0,\ldots,U_d)$ defined in Notation \ref{notation-new},
    we will use induction to show that for at least one node in $U_{t_i}$ a structural difference is manifested, while no structural difference manifests in $\mathcal{N}_{\underline{v}}^{d-t_i} \setminus U_{t_i}$ rel. to $\phi$ and $\bZ^i\coloneqq \bU_{p_{:i}}\bX$ for $0\leq i \leq m$.
    Since $t_m=d$ and $p_{:m}=p$,   this will imply Eq. \ref{eqn: Us are different-sketch-new} and thus prove the theorem. As  with Lemma~\ref{thm:lem-onion-new}, the base case $i=0$ follows from Notation~\ref{notation-new}(i).
    In the inductive step, it suffices to show that 
    \begin{equation}\label{eq:wav-step-diff2}
        \bZ^{i+1}[u]\neq\bZ^{i+1}[\phi(u)],
     \end{equation}
    for at least one $u\in U_{t_{i+1}}$ and
    \begin{equation}\label{eq:wav-step-eq2}
        \bZ^{i+1}[w]=\bZ^{i+1}[\phi(w)],
    \end{equation}
    for all $w\in\nbh_{\underline{v}}^{d-t_{i+1}}\setminus U_{t_{i+1}}$, under the assumption that  these results are true for $i$. Since $\bZ^{i+1}=\bPsi_{k_{i+1}} \sigma\big(\bZ^i\big)$ and $\sigma$ is increasing and therefore injective, it suffices to show that these results are preserved under multiplication by a  wavelet matrix $\bPsi_{k_{i+1}}=\bP^{2^{k_{i+1}-1}} - \bP^{2^{k_{i+1}}}$. This can be verified using Lemma~\ref{thm:lem-onion-new} (with $\sigma(\bZ^i)$ in place of $\btX$) and the inductive hypothesis.
\end{proof}

\begin{rem}
In the proof of Theorem~\ref{thm:main-2-new}, for the sake of concreteness, we chose $\bTheta=\Id_n$ and $p$ such that $|p|=\sum_{j=1}^m2^{k_i}=d$. However, inspecting the proof we see that our analysis may also be extended to paths with $|p|\geq d$ and also that 
the choice of $\boldsymbol{\Theta}$ does not matter as long as $\bTheta$ is invertible. In particular, if the entries of $\bTheta$ are generated i.i.d.\ at random from a continuous probability distribution, the conclusions of the theorem will hold with probability one\cite{Rudelson_invertibilityof}. We also note that Theorems \ref{thm:main-1-new} and \ref{thm:main-2-new} may be extended to the case where the update rule features a bias term $\bB$ as in Eq.~\ref{eq:gcn-mod} and \ref{eq:scat-mod} since $\bX\rightarrow\bX+\bB$ is injective.
\end{rem}

Results analogous to Theorem~\ref{thm:main-2-new} can likely also be proven for deep AC-GNNs with $d$ subsequent layers. The proof would closely follow the idea of Lemma~\ref{thm:lem-onion-new} and would likely be slightly simpler than the proof of Theorem~\ref{thm:main-2-new}.
Indeed, inspecting the proof of Theorem~\ref{thm:main-2-new}, we see that part of the reason we need both Eq.~\ref{eq:wav-step-diff2} and Eq.~\ref{eq:wav-step-eq2} is because the definition of $\bPsi_k$ involves subtraction. Therefore, establishing Eq.~\ref{eq:wav-step-eq2} could be skipped for an AC-GNN.
However, we would need considerably stronger assumptions when working with a $d$-layer AC-GNN $F_\ac^d \circ \dots \circ F_\ac^1$. Apart from an assumption analogous to no coincidental correspondence, we would additionally need injectivity of all functions $\sigma_i, \com_i, g_i$ that characterize $F_\ac^i, 1\leq i\leq d$, following Definition~\ref{def:ac-gnn}.
Moreover, scattering filters are a more practical choice for two major reasons. Firstly, Scattering filters nicely balance the trade-off between oversmoothing and underreaching and are able to utilize a broad receptive field while still preserving information contained in higher frequencies as explained in Proposition \ref{prop:frame} and the subsequent discussion. 
Secondly, scattering filters exhibit wide receptive fields with significantly fewer trained parameters. 

\section{Empirical Results}\label{sec:results}
We now present our empirical results starting with semi-supervised node classification. We first show that our Sc-GCN model~\cite{min2020scattering} is able to overcome the oversmoothing and underreaching problems and exhibits superior performance to either a pure GCN Network or a pure scattering network. We then show that we can further improve performance, particularly on complex datasets, by using GSAN\cite{min2021geometric}, which incorporates an attention mechanism. Finally, we also show that our models perform well on graph-level tasks.

\subsection{Semi-Supervised Node Classification}

\noindent\textbf{Scattering GCN.}
We compare the performance of Sc-GCN to both the  original GCN \cite{kipf2016semi} as well as the closely related ChebNet \cite{defferrard2016convolutional} and a network which uses Gaussian random fields to propagate label information~\cite{zhu2003semi}. Additionally, we compare against \cite{zou2020graph}, which uses a pure-scattering approach. Further, we also compare Sc-GCN to two methods designed, in part, to address the oversmoothing problem. In~\cite{li2018deeper}, this problem is directly addressed by using partially absorbing random walks \cite{wu2012learning} to slow the mixing of node features in regions of the graph with high connectivity.
The GAT graph attention network from~\cite{velivckovic2018graph}, on the other hand, addresses this problem indirectly via an attention mechanism which trains an adaptive node-wise weighting of the smoothing operation. Lastly, we compare to an SVM classifier  which ignores the graph structure. 

In our experiments, we used four popular datasets summarized in Tab.~\ref{table:dataset}. For full details on these datasets, please see, e.g.,~\cite{yang2016revisiting} for Citeseer, Cora, and Pubmed and~\cite{pan2016tri} for DBLP. As discussed in~\cite{li2018deeper}, the oversmoothing problem is most acute in highly connected networks where information mixes rapidly and nodes quickly become indistinguishable. Therefore, we included networks with different sizes and connectivity structures and order the datasets by connectivity structure (with the most connected on the right). Consistent with our expectation, Tab.~\ref{tab_test accuracies} and Fig.~\ref{fig_results} show that the advantages of our network are most significant for the networks with high connectivity.

We implemented all of these methods based upon the code made available by the original authors, and, for fair comparison, we tuned and evaluated each of these methods using the standard splits provided for benchmark datasets. Whenever possible, we made sure that our results are consistent with previously reported accuracies. We tuned our method, both the hyperparameters and number of scattering and GCN channels, via a grid search (over the same set for all datasets), and we used the same cross-validation procedure for our method and the competing methods. In Appendix \ref{sec:apx-empirical}, we provide further details and also perform an ablation study evaluating the importance of each component in our proposed architecture.

\begin{table}[!t]
\caption{Dataset characteristics: number of nodes, edges, and features; mean $\pm$ std.\ of node degrees; ratio of \#edges to \#nodes.}
\label{table:dataset}
\centering
\begin{tabular}{cccccccc}
\hline
Dataset & Nodes & Edges& Features & Degrees  & $\frac{\text{Edges}}{\text{Nodes}}$\\
\hline
Citeseer & 3,327 & 4,732 & 3,703 & 3.77$\pm$3.38 & 1.42 \\
Cora & 2,708 & 5,429 & 1,433 & 4.90$\pm$5.22 & 2.00 \\
Pubmed & 19,717 & 44,338 & 500 & 5.50$\pm$7.43 & 2.25 \\
DBLP & 17,716 & 52,867 & 1639 & 6.97$\pm$9.35 & 2.98 \\
\hline
\end{tabular}
\end{table}

\begin{table}[!t]
\caption{Classification accuracy (top two marked in bold; best one underlined) of Scattering GCN on four benchmark datasets (ordered by increasing connectivity) compared to four other GNNs~\cite{velivckovic2018graph,li2018deeper,kipf2016semi,defferrard2016convolutional}, a non-GNN approach \cite{zhu2003semi} based on belief propagation, a pure graph scattering baseline~\cite{zou2020graph}, and a nongeometric baseline only using node features with linear SVM.}
\label{tab_test accuracies}
\centering
\begin{tabular}{lcccr}
\hline
Model & Citeseer & Cora & Pubmed & DBLP \\
\hline
Sc-GCN (ours) & \textbf{71.7} & \underline{\textbf{84.2}} & \underline{\textbf{79.4}}& \underline{\textbf{81.5}} \\
GAT~\cite{velivckovic2018graph} & \underline{\textbf{72.5}} & \textbf{83.0} & 79.0 & 66.1 \\
Partially absorbing~\cite{li2018deeper} & 71.2 & 81.7 & \textbf{79.2} & 56.9 \\
GCN~\cite{kipf2016semi} & 70.3 & 81.5 & 79.0 & 72.0\\
Chebyshev~\cite{defferrard2016convolutional} & 69.8 & 78.1 & 74.4 & 57.3 \\
Label Propagation~\cite{zhu2003semi} & 58.2 & 77.3 & 71.0 & 53.0 \\
\hline
Graph scattering~\cite{zou2020graph} &  67.5 & 81.9 & 69.8 & \textbf{69.4} \\
\hline
Node features (SVM) &  61.1 & 58.0 & 49.9 & 48.2 \\

\hline
\end{tabular}
\end{table}

We report test classification accuracy in Tab.~\ref{tab_test accuracies}, which shows that our approach outperforms other methods on three out of the four considered datasets. Only on Citeseer, we are slightly outperformed by GAT. However, this is the dataset with the weakest connectivity structure (see Tab.~\ref{table:dataset}), while at the same time having the most informative node features (achieving 61.1\% accuracy via linear SVM without considering graph information). In contrast, on DBLP, which exhibits the richest connectivity structure and least informative features (only 48.2\% SVM accuracy), we significantly outperform GAT (by over 15\% improvement). Notably, GAT itself significantly outperforms all other methods (by 6.8\% or more) apart from the graph scattering baseline~\cite{zou2020graph}.

\begin{figure*}[!t]
    \centering
    \includegraphics[width=0.25\textwidth]{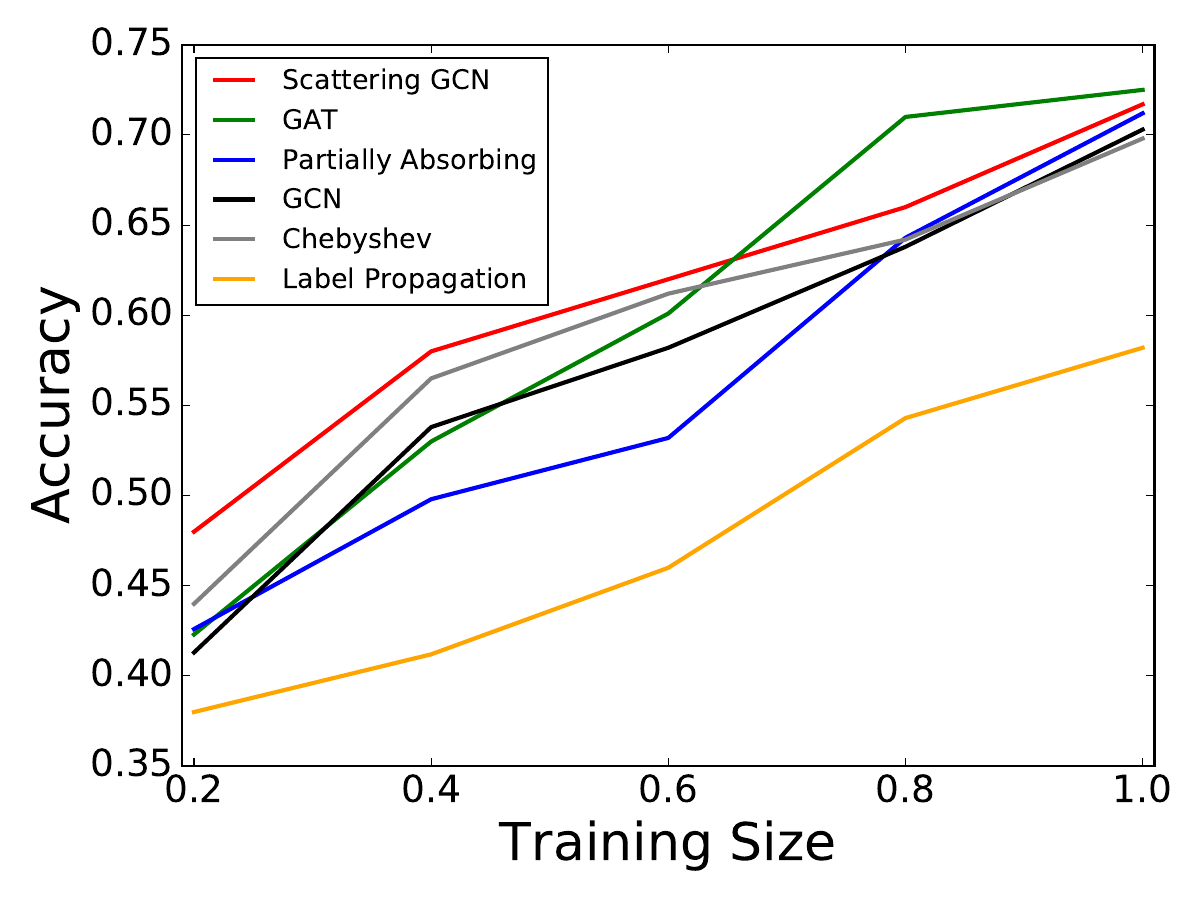}%
    \includegraphics[width=0.25\textwidth]{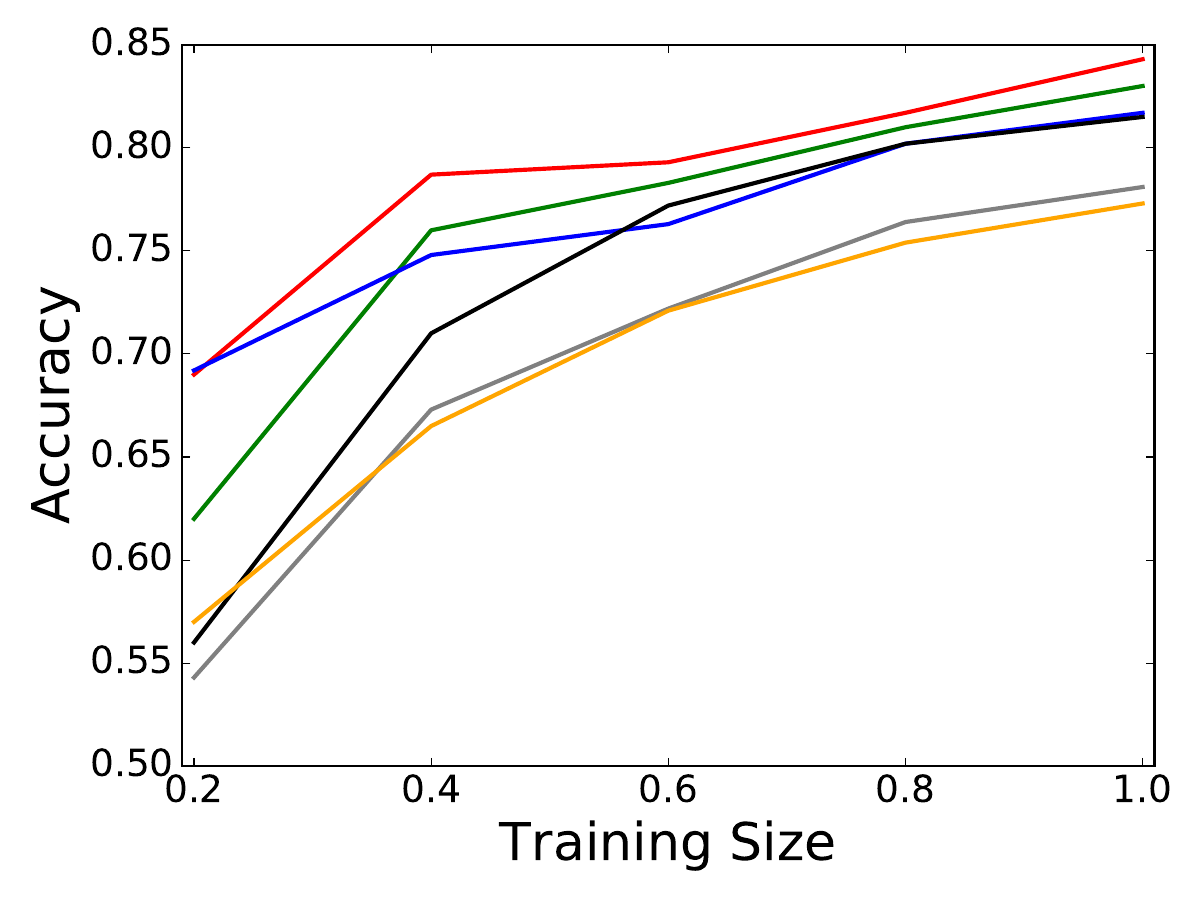}%
    \includegraphics[width=0.25\textwidth]{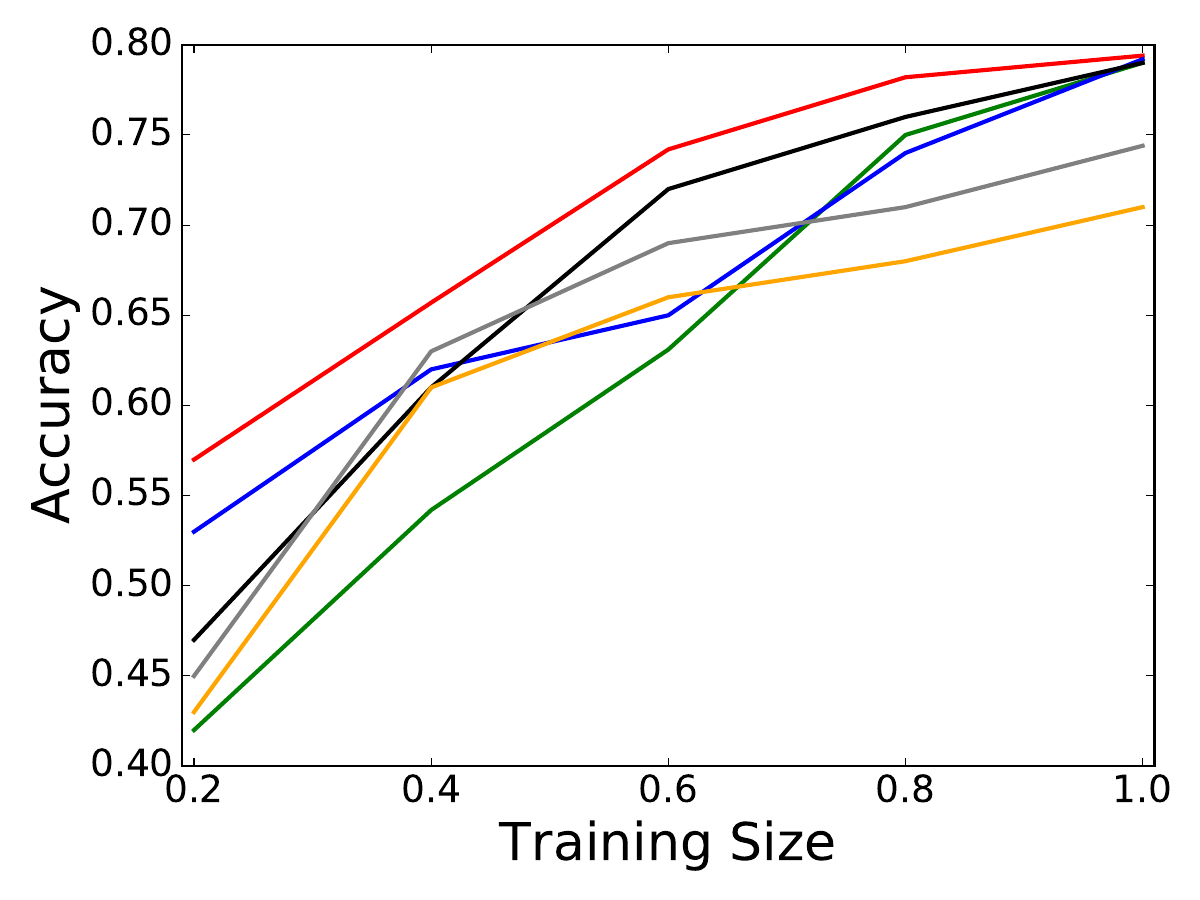}%
    \includegraphics[width=0.25\textwidth]{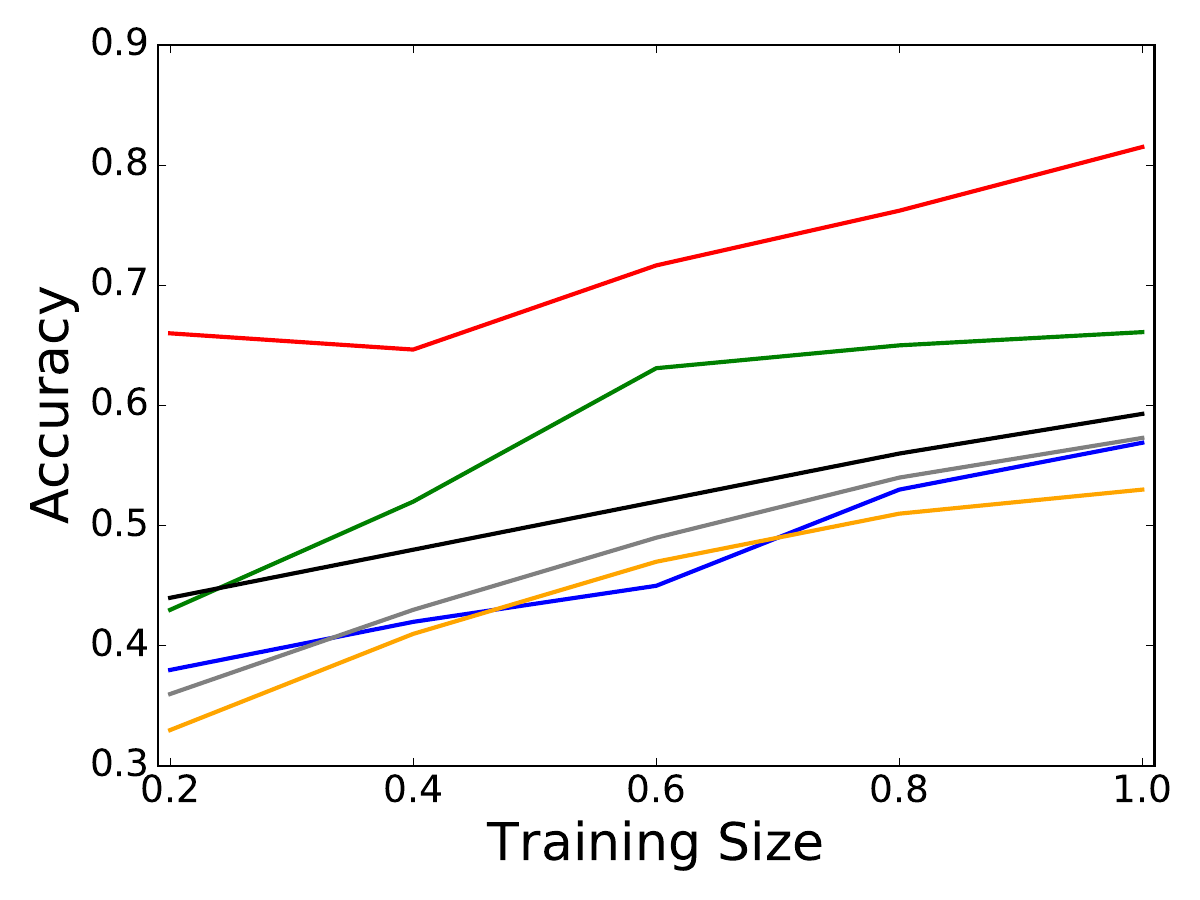}%
    \hfil
    \subfloat[Citeseer]{\includegraphics[width=0.25\textwidth]{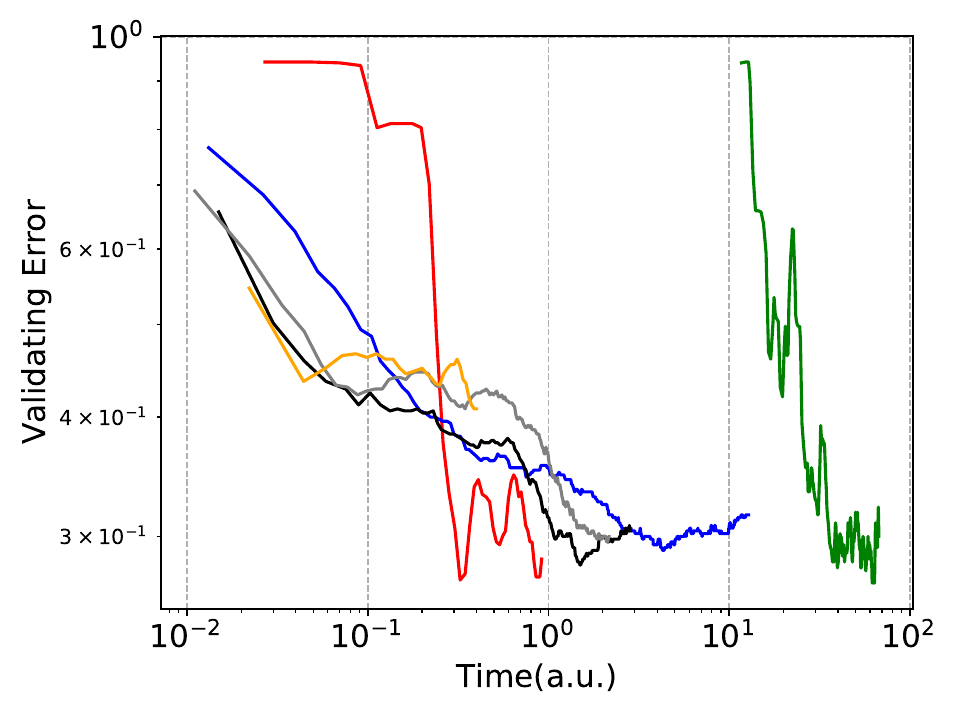}%
    \label{fig:citeseer}}
    \subfloat[Cora]{\includegraphics[width=0.25\textwidth]{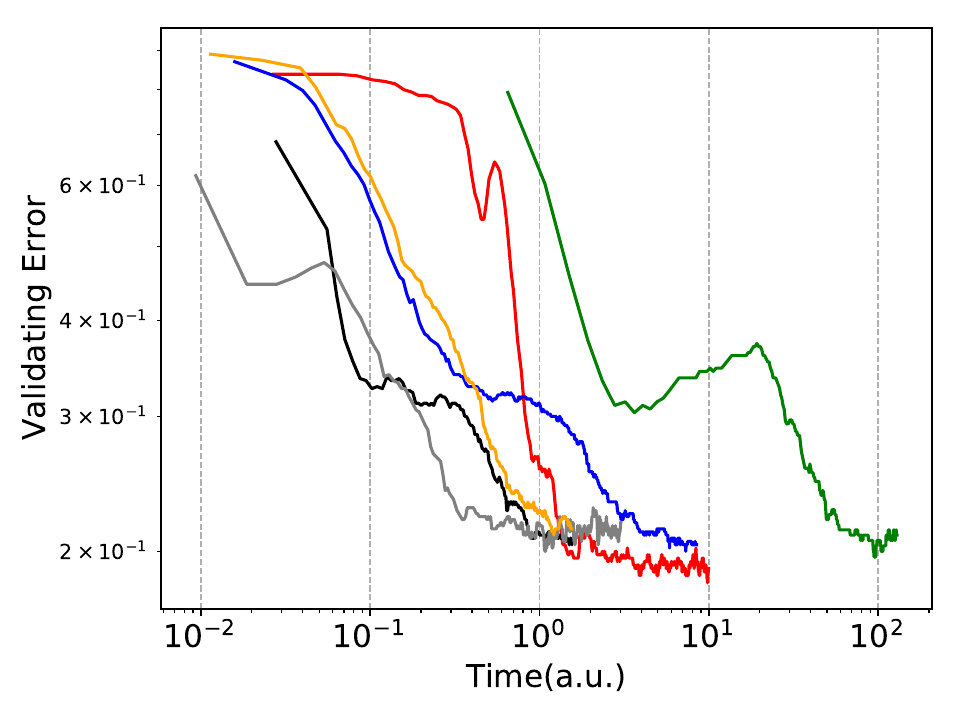}%
    \label{fig:cora}}
    \subfloat[Pubmed]{\includegraphics[width=0.25\textwidth]{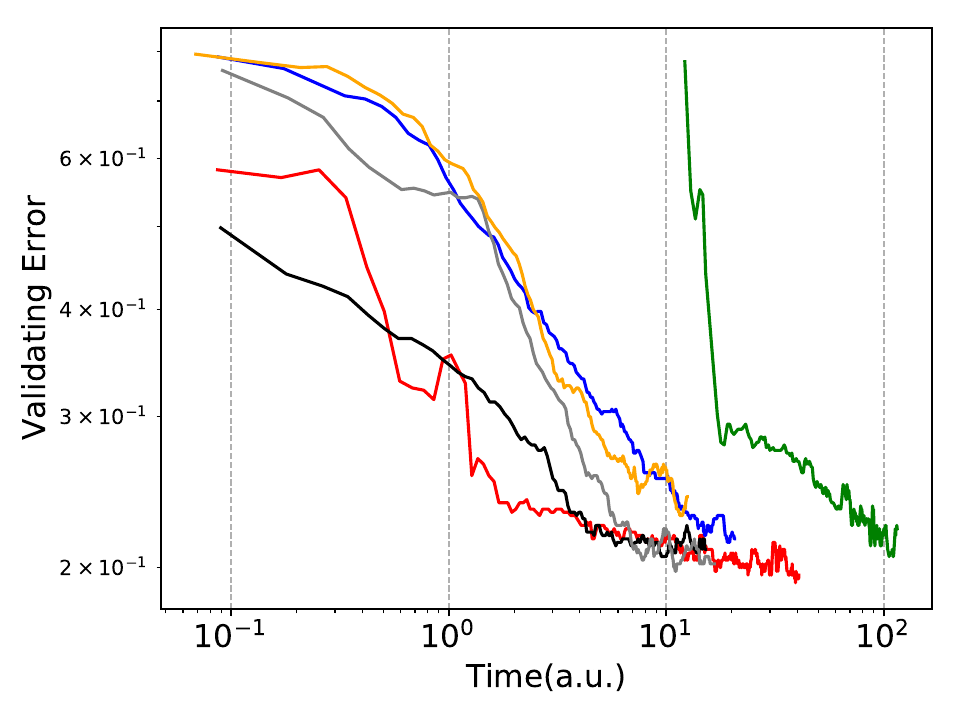}%
    \label{fig:pubmed}}
    \subfloat[DBLP]{\includegraphics[width=0.25\textwidth]{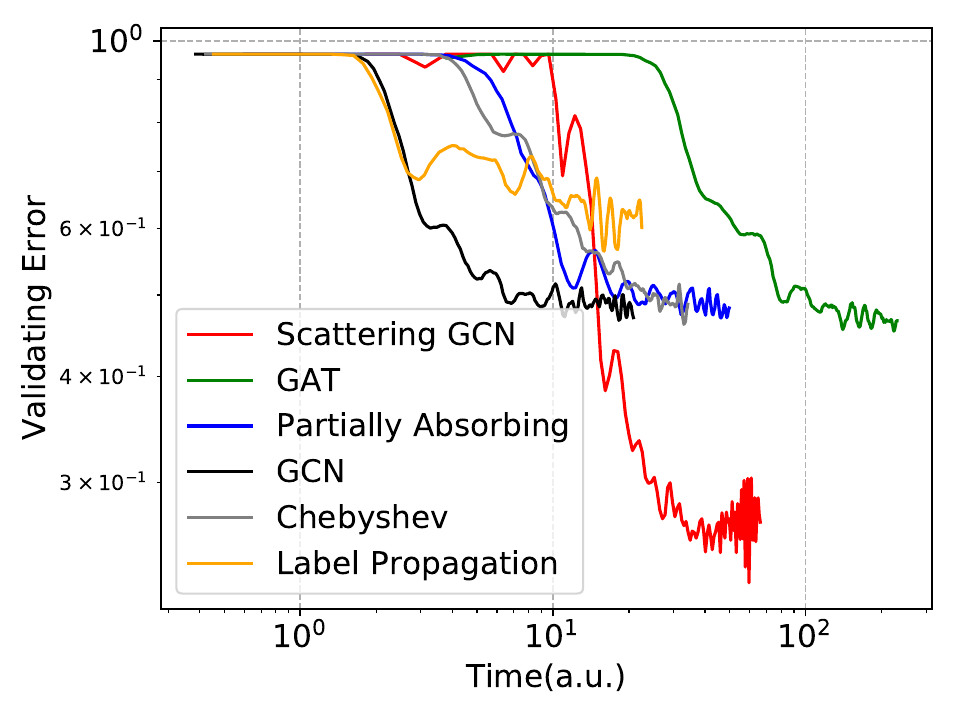}%
    \label{fig:dblp}}
    \caption{Impact of training set size (top) and training time (bottom) on classification accuracy and error (correspondingly); training size measured relative to the original training size of each dataset; training time and validation error plotted in logarithmic scale; runtime measured for all methods on the same hardware, using original implementations accompanying their publications.}
    \label{fig_results}
\end{figure*}

The task of semi-supervised learning is motivated by the fact that in real-world applications one typically only has labels for a small fraction of the nodes. Therefore, we next analyze the performance of our network with limited amounts of training data. Fig.~\ref{fig_results} (top) plots the training accuracy of Sc-GCN, as well as the other networks (except the baselines), as the amount of training data  varies from 20\% to 100\% of its original amount. Across all networks, we see that Sc-GCN exhibits the best performance when the training data is sufficiently limited. Moreover, we see that it outperforms GAT, which is the next best performing network in terms of overall accuracy (see Tab.~\ref{tab_test accuracies}), on all datasets whenever the training data is reduced to 60\% of its original size or less.

In Fig.~\ref{fig_results} (bottom), we plot the evolution of the validation error during the training process. We note that Sc-GCN is able to reach a low training error significantly faster than GAT, which is the next most competitive in terms of overall accuracy. Several other methods do achieve low validation errors faster on several datasets, but their final accuracy is much lower than that of Sc-GCN. Moreover, on Pubmed, which is the largest dataset, Sc-GCN achieves a low validation error at about the same rate as GCN and at a significantly faster rate than all other methods. \\

\begin{table*}[!b]
\renewcommand{\arraystretch}{1.3}
\caption{Dataset characteristics \& comparison of node classification test accuracy. Datasets are ordered by increasing homophily.}\label{tab:data_accu}
\centering
\begin{tabular}{|c||c|c|c|c||
c|c||c|c|}
\hline
Dataset  & Classes & Nodes & Edges & Homophily & GCN & GAT & Sc-GCN & GSAN \\\hline
Texas & 5 & 183 &295 & 0.11 & 59.5 &58.4 & 60.3 & 60.5\\ \hline
Chameleon & 5 & 2,277 & 31,421 &0.23 & 28.2 & 42.9 & 51.2 & 61.2 \\ \hline
CoraFull & 70 & 19,793 &63,421 &0.57 & 62.2 & 51.9 & 62.5 & 64.5 \\ \hline
Wiki-CS & 10 & 11,701 &216,123 &0.65 &77.2 &77.7 & 78.1 & 78.6\\ \hline
Citeseer & 6 & 3,327 &4,676 &0.74 &70.3 &72.5 & 71.7 & 71.3\\ \hline
Pubmed & 3 & 19,717 &44,327 &0.80 &79.0 &79.0 & 79.4 & 79.8\\ \hline
Cora & 7 & 2,708 &5,276 &0.81 &81.5 &83.0 & 84.2 & 84.0\\ \hline
DBLP & 4 & 17,716 &52,867 &0.83 &72.0 &66.1 & 81.5 & 84.3\\ \hline
\end{tabular}
\end{table*}

\noindent\textbf{Geometric Scattering Attention Network.}
Having established the practical utility of the hybrid network Sc-GCN above, we next show that performance may be further improved by using GSAN, which builds upon Sc-GCN by incorporating an attention mechanism.
We evaluate performance again for semi-supervised node classification and compare to two popular models (namely GCN~\cite{kipf2016semi} and GAT~\cite{velivckovic2018graph}), as well as the original Sc-GCN. We extend our experiments to eight benchmark datasets that we order according to increasing homophily, as shown in Tab.~\ref{tab:data_accu}. The homophily of a graph is quantified by the fraction of edges that connect nodes that share the same label. High homophily indicates a relatively smooth label distribution, while low homophily suggests a more complex labeling pattern. Texas and Chameleon are low-homophily datasets where nodes correspond to webpages and edges to links between them, with classes corresponding to webpage topic or monthly traffic (discretized into five levels), respectively~\cite{pei2020geom}. Wiki-CS consists of nodes that represent computer science articles with the edges representing the hyperlinks~\cite{mernyei2020wiki}. CoraFull is the larger version of the Cora dataset~\cite{bojchevski2018deep}.
The remaining four datasets (i.e., Cora, Citeseer, Pubmed, DBLP) are the citation networks that we already studied when evaluating Sc-GCN above. Notably, compared to the added datasets, the latter four exhibit relatively high homophily.

We split the datasets into train, validation and testing sets. The hyperparameters (number of attention heads $\Gamma$, residual parameter $\alpha$ and channel widths) are tuned via grid search using the validation set.

The results in Tab.~\ref{tab:data_accu} show that both Sc-GCN and GSAN outperform GCN and GAT on seven out of eight datasets. The advantages of GSAN over Sc-GCN are particularly notable on the medium-size and large datasets Chameleon and CoraFull that exhibit relatively low-homophily. The most striking result is for Chameleon. Here, Sc-GCN clearly outperforms GCN and GAT (by at least 8.2\%), but GSAN considerably improves performance even more (by additional 10\%).

Analyzing the node-wise attention weights (see Sec.~\ref{sec:sc-atn}) allows us to understand the importance of different channels for different datasets. We consider here the ratio between attention assigned to band-pass and low-pass channels (over all attention heads). For that, we first sum up attention over low-pass and band-pass channels, respectively, i.e., $\balpha_\low\coloneqq\sum_\gamma\sum_i \boldsymbol{\alpha}_{\low,i}^\gamma$ and $\balpha_\band\coloneqq\sum_\gamma\sum_i \boldsymbol{\alpha}_{\band,i}^\gamma$. Then, for each node $v$, we calculate the ratio $\zeta_v\coloneqq\balpha_\band[v] / \balpha_\low[v]$. In Fig.~\ref{fig_attention-correlation}, we present the distributions of $\{\zeta_v : v\in V\}$ for four datasets.

\begin{figure}[!t]
    \centering
    \includegraphics[width=\columnwidth]{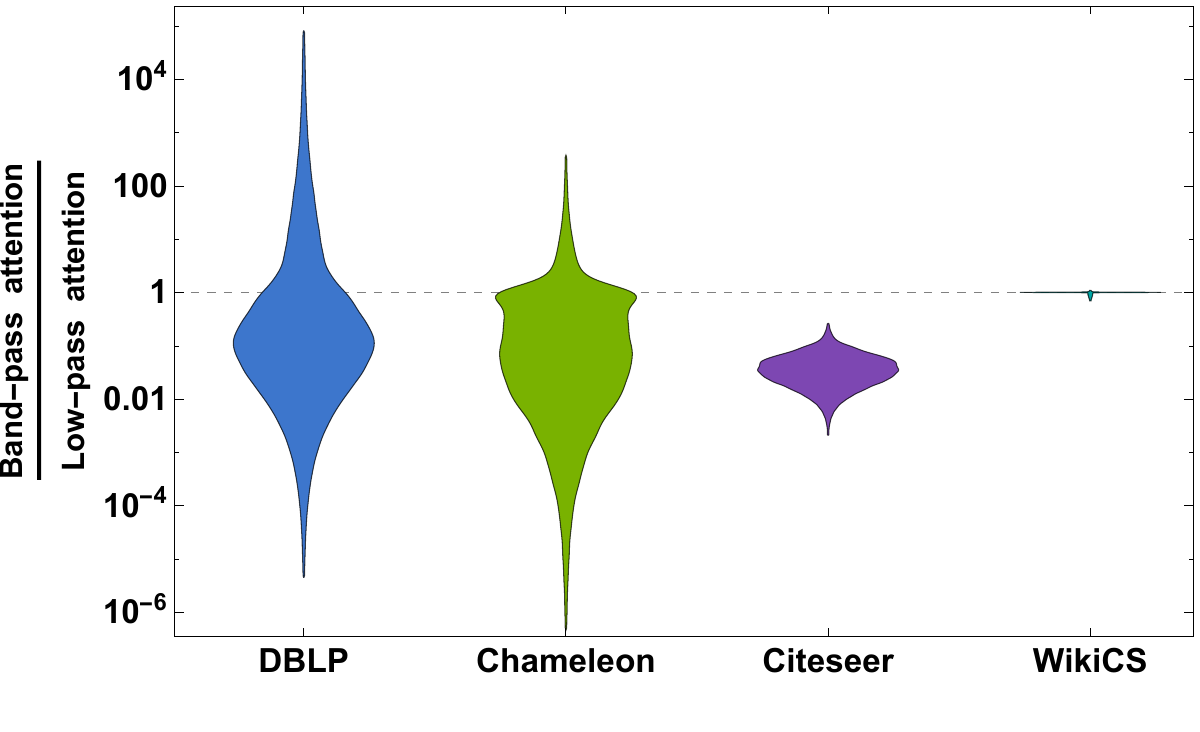}
    \vspace{-15pt}
    \caption{Distribution of attention ratios per node between band-pass (scattering) and low-pass (GCN) channels across all heads for DBLP, Chameleon, Citeseer, and WikiCS.}
    \label{fig_attention-correlation}
\end{figure}

Examining the distributions, we observe four different dataset profiles. Citeseer and WikiCS exhibit minimal spread of attention ratios. In Citeseer, most of the attention is given to the low-pass channels, while WikiCS shows very balanced channel usage. In contrast, DBLP and Chameleon exhibit large spreads. Although the majority of nodes in both datasets values low-pass filters more, some nodes in Wiki-CS pay up to five times the attention to band-pass channels compared to low-pass ones (and vice versa). Interestingly, the distribution for Chameleon has two peaks, suggesting two node populations that require different band-information, which we interpret to be a driving factor for the improvement of GSAN on Chameleon (see Tab.~\ref{tab:data_accu}).

\subsection{Graph Classification and Regression}

\noindent\textbf{Graph Classification.}
We use COLLAB  and IMBD-BINARY as benchmark datasets.
These are ego networks extracted from scientific collaborations and movie collaborations, respectively \cite{yanardag2015deep}. COLLAB contains 5,000 graphs with three classes and IMBD-BINARY contains 1,000 graphs with two classes. We use an 80-10-10 split between training, validation and test. The datasets contain graph structures but no associated graph signals. In our work, we compute the eccentricity (for connected graphs), degree and clustering coefficient of each vertex, and use these as input signals to our network. 

We use a one-layer GSAN  with 8 heads and 16 hidden units for COLLAB and one-layer GSAN with 4 heads and 16 hidden units for IMBD-BINARY, followed by one graph residual convolutional layer.  We then  apply a Set2Set module \cite{vinyals2015order} followed by a multi-layer perceptron (MLP). Set2Set is a graph pooling architecture\cite{hochreiter1997long} where we apply an LSTM-based module for $k$ processing steps iteratively on the graph signal. During each processing step, node-wise feature signals with variable graph size are integrated using an LSTM-based attention mechanism. In our model, the GSAN-based layer outputs a graph signal $\bX\in\R^{n\times d}$ and the Set2Set layer outputs a graph level feature vector $Set2Set(\bX) \in \R^{2d}$ for each graph. After the Set2Set layer, we apply an MLP layer $\R^{2d}\rightarrow \R$ on the graph level signal for the classification task.  We use a Set2Set layer with 3 processing steps and a two-layer MLP for graph pooling.
We also provide a pure-scattering baseline (Sc-only) based on the approach proposed in \cite{zou2020graph} together with an SVM classifier.
Our results in Tab.~\ref{tab:class} show that both proposed hybrid scattering networks show improvement over the compared baselines.

\begin{table}[!ht]
\renewcommand{\arraystretch}{1.3}
\caption{Comparison of graph classification test accuracy (higher is better).}\label{tab:class}
\centering
\begin{tabular}{|c|c|c|c|c|c|}
\hline
Dataset  & GCN & GAT & {Sc-only} & Sc-GCN & GSAN \\\hline
COLLAB & 0.592 &0.523  & 0.640 & 0.690 & 0.704\\ \hline
IMBD-BINARY & 0.710 &0.632 & 0.710 & 0.740  & 0.760  \\ \hline
\end{tabular}
\end{table}
\begin{table}[!b]
\renewcommand{\arraystretch}{1.3}
\caption{Comparison of graph regression test error (lower is better).}\label{tab:reg}
\centering
\begin{tabular}{|c|c|c|c|c|c|}
\hline
Dataset   & GCN & GAT & Sc-only & Sc-GCN & GSAN\\\hline
ZINC  & 0.469 &0.463 & 0.51 & 0.452 & 0.430\\ \hline
Lipophilcity  & 1.05 &0.950 & 1.19 & 1.03 & 1.02\\ \hline
\end{tabular}
\end{table}


\noindent\textbf{Graph Regression.}
We use ZINC and Lipophilicity as benchmark datasets \cite{irwin2005zinc,wu2018moleculenet}. For the ZINC dataset, the target is the penalized water-octanol partition coefficient of molecules. For the Lipophilcity dataset, the task is predicting the experimental octanol/water distribution coefficient of different molecules.
ZINC contains 1,000 graphs and Lipophilcity contains 4,200 graphs. Graphs in ZINC have 75 input features and graphs in Lipophilcity have 9. We use a 80-10-10  split  between training,  validation  and  testing.

We again use a GSAN-based approach, this time consisting of two geometric scattering attention layers followed by one graph residual convolution layer. For ZINC, we use a 32-head GSAN with 32 hidden units as the first scattering layer and an one-head GSAN with 128 hidden units as the second layer. For Lipophilcity, we use a 4-head GSAN with 32 hidden units as the first scattering layer and an one-head GSAN  with 96 hidden units as the second layer. The aggregation module uses an MLP, applied to the output node representations, producing a scalar output, followed by an average pooling on each molecule. Again, we provide a scattering-only baseline (Sc-only), where we used a modified version of GSAN, but with all low-pass channels removed, so that our network utilizes only scattering channels. Our results are presented in Tab.~\ref{tab:reg}, demonstrating the utility of our GSAN approach for this task as well. 

\section{Conclusion}\label{sec:conclusion}
Here, we studied GNN approaches for semi-supervised node classification and investigate some of the major limitations of today's architectures. Many popular models are known to essentially rely on the low-pass filtering of graph signals. Based on this observation, we propose a novel hybrid GNN framework that can leverage higher frequency information not captured by traditional GCN models. Our construction is based on geometric scattering, a concept that was previously used mainly for graph classification, which we adapt to the node level.
Our theoretical study suggests that scattering filters nicely balance the trade-off between oversmoothing and underreaching.
Therefore, we are able to theoretically establish that the resulting scattering filters are more sensitive to the graph topology than a large class or GNN architectures including GCN when used in conjunction with node features that encode graph structure information. Empirically, we first evaluated our Sc-GCN model and demonstrated its efficacy in alleviating so-called oversmoothing. We then evaluated our GSAN model, which further improves performance, particularly in more complex (low-homophily) settings, via  an attention framework. We also provide evidence that the proposed hybrid scattering networks perform well in graph-level tasks, both in classification and regression. In future work, one might further explore the potential of features that carry graph structure information empirically, and analyze more datasets in the context of graph-level tasks.

\ifCLASSOPTIONcompsoc
  \section*{Acknowledgments}
\else
  \section*{Acknowledgment}
\fi
%
The authors would like to thank Dongmian Zou for fruitful discussions. This work was partially funded by \mbox{Fin-ML} CREATE graduate studies scholarship for PhD~[\emph{F.W.}]; IVADO (Institut de valorisation des données) grant PRF-2019-3583139727, FRQNT (Fonds de recherche du Québec - Nature et technologies) grant 299376, Canada CIFAR AI Chair~[\emph{G.W.}]; NSF (National Science Foundation) grant DMS-1845856~[\emph{M.H.}]; and NIH (National Institutes of Health) grant NIGMS-R01GM135929~[\emph{M.H.,G.W.}]. The content provided here is solely the responsibility of the authors and does not necessarily represent the official views of the funding agencies.

\ifCLASSOPTIONcaptionsoff
  \newpage
\fi



%



\bibliographystyle{IEEEtran}
\bibliography{references}

%


\begin{IEEEbiography}[{\includegraphics[trim=0in 1.5in 0in 0.5in,width=1in,clip,keepaspectratio]{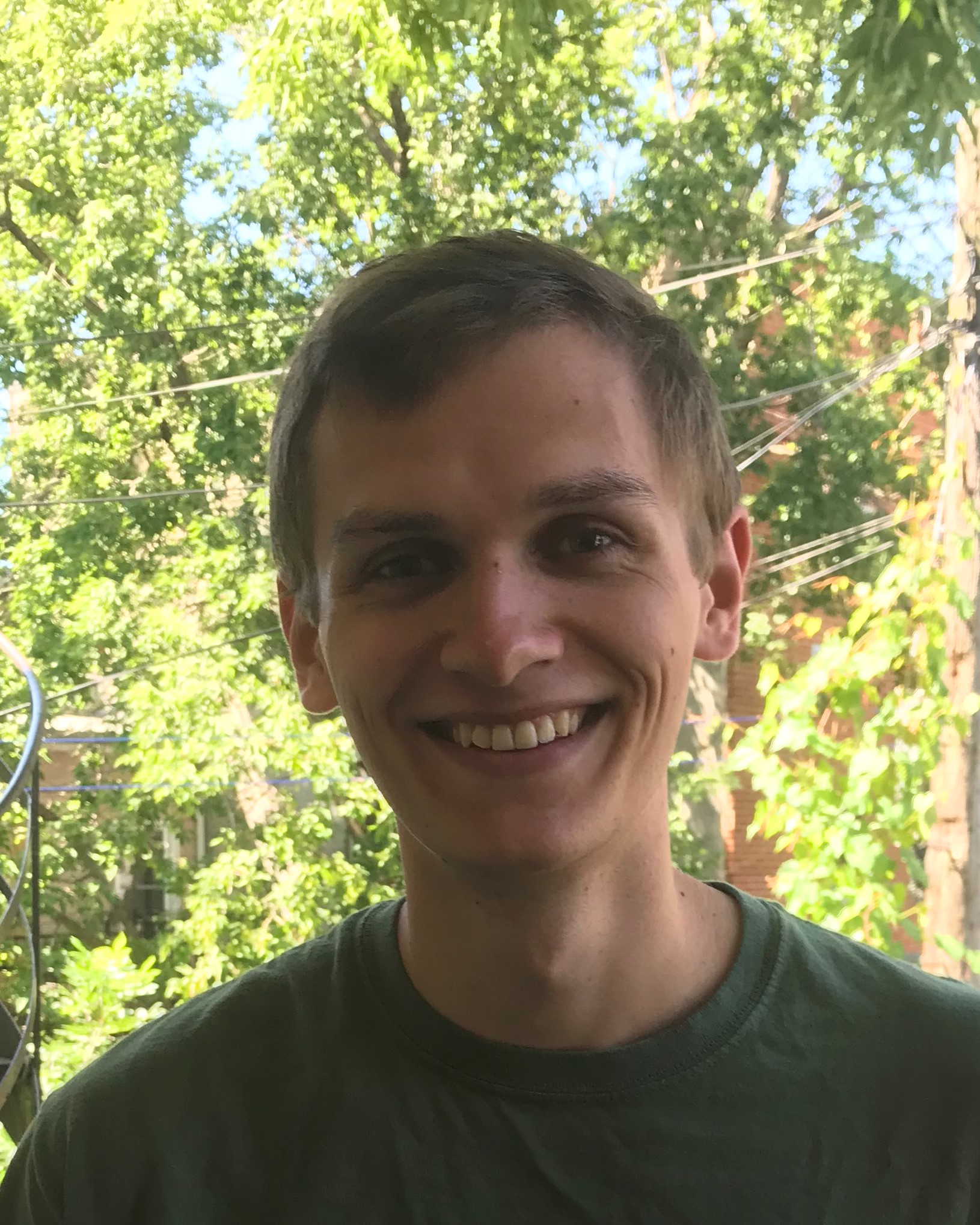}}]{Frederik Wenkel}
received the B.Sc.\ degree in Mathematics and the M.Sc.\ degree in Mathematics at Technical University of Munich, in 2019. He is currently a Ph.D.\ candidate in Applied Mathematics at Universit\'{e} de Montr\'{e}al (UdeM) and Mila (the Quebec AI institute), working on geometric deep learning. In particular, he is interested in graph neural networks and their applications in domains such as social networks, bio-chemistry and finance.
\end{IEEEbiography}

\begin{IEEEbiography}[{\includegraphics[trim=0in 0in 0in 0in,width=1in,clip,keepaspectratio]{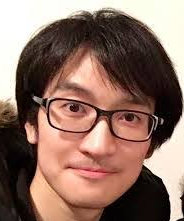}}]{Yimeng Min} is a Ph.D. student in Computer Science at Cornell University. He received his B.Sc.\ degree in Physics from Nanjing University and M.A.Sc.\ degree in Electrical and Computer Engineering from U. of Toronto. His research area is Artificial Intelligence with a focus on graph neural networks and  constraint reasoning.
\end{IEEEbiography}

\begin{IEEEbiography}[{\includegraphics[width=1in,clip,keepaspectratio]{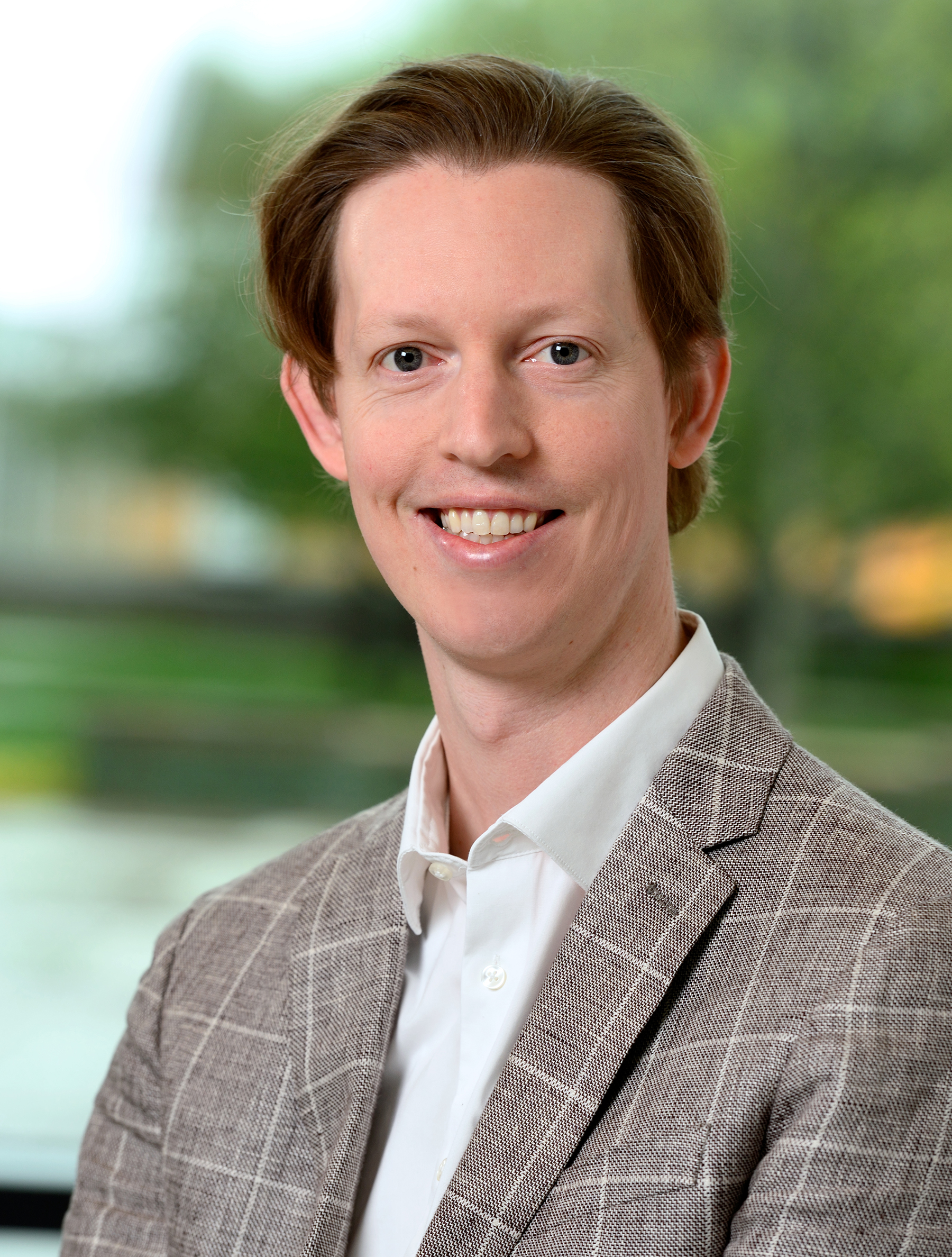}}]{Matthew Hirn} is an Associate Professor in the Dept. of CMSE and the Dept. of Mathematics at Michigan State University. He is the leader of the ComplEx Data Analysis Research (CEDAR) team, which develops new tools in computational harmonic analysis, machine learning, and data science for the analysis of complex, high dimensional data. Hirn received his B.A. in Mathematics from Cornell University and his Ph.D. in Mathematics from the University of Maryland, College Park. Before arriving at MSU, he held postdoctoral appointments in the Applied Math Program at Yale University and in the Department of Computer Science at \'{E}cole Normale Sup\'{e}rieure, Paris. He is the recipient of the Alfred P. Sloan Fellowship (2016), the DARPA Young Faculty Award (2016), the DARPA Director’s Fellowship (2018), and the NSF CAREER award (2019), and was designated a Kavli Fellow by the National Academy of Sciences (2017).
\end{IEEEbiography}

\begin{IEEEbiography}[{\includegraphics[width=1in,clip,keepaspectratio]{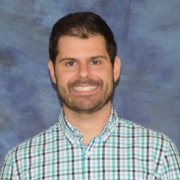}}]{Michael Perlmutter} is a Hedrick Assistant Adjunct Professor in the Dept. of Mathematics at the UCLA. He has also held postdoctoral positions in the Dept. of CMSE at Michigan State University and in the Dept. of Statistics and Operations Research at the University of North Carolina at Chapel Hill. He earned his Ph.D. in Mathematics from Purdue University in 2016. His research uses the methods of applied probability and harmonic analysis to develop and analyze methods for data sets with geometric structure.
\end{IEEEbiography}

\begin{IEEEbiography}[{\includegraphics[trim=0in 1.5in 0in 0.5in,width=1in,clip,keepaspectratio]{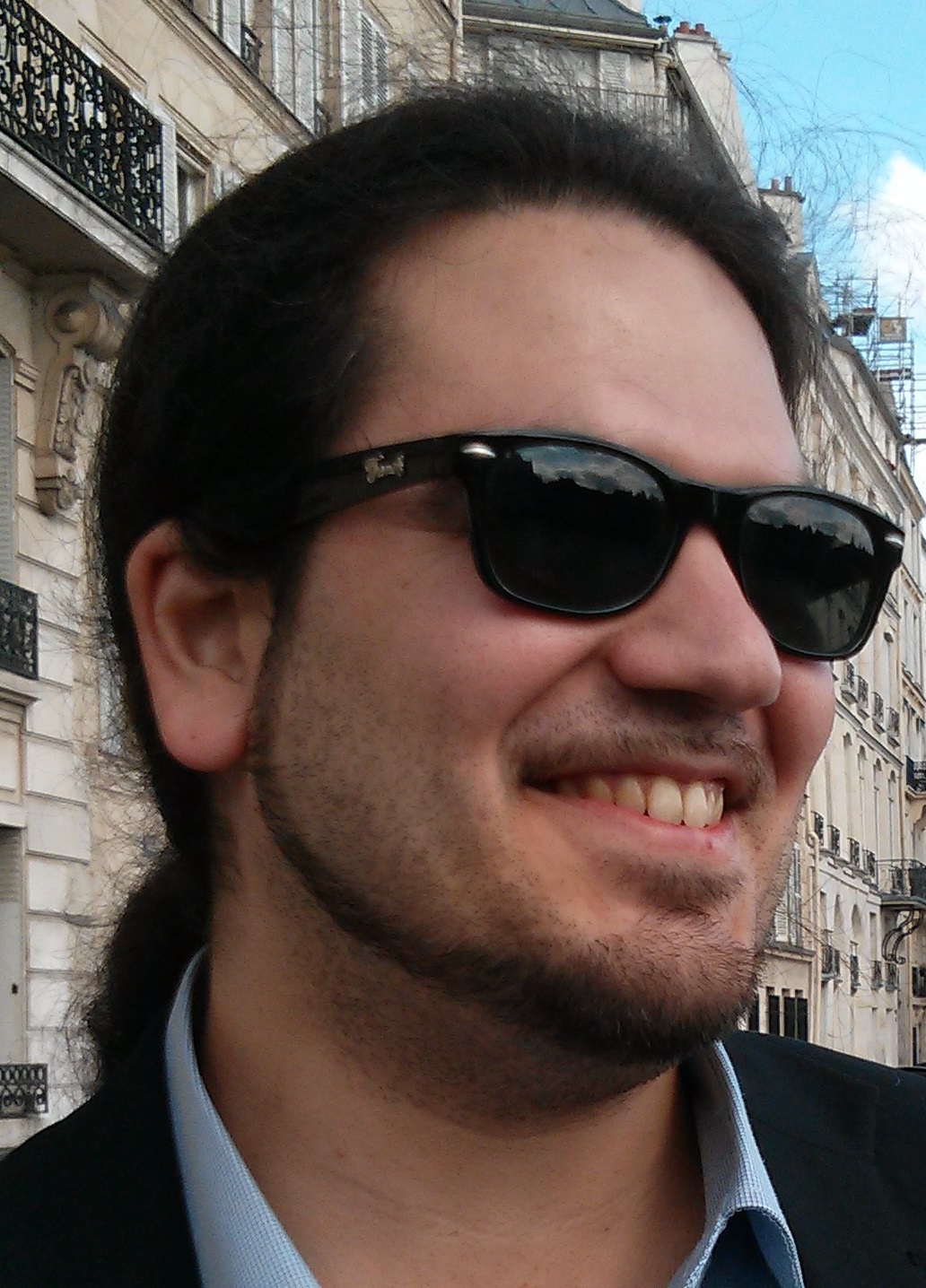}}]{Guy Wolf}
is an associate professor in the Depat. of Mathematics and Statistics (DMS) at the Universit\'{e} de Montr\'{e}al (UdeM), a core academic member of Mila (the Quebec AI institute), and holds a Canada CIFAR AI Chair. He is also affiliated with the CRM center of mathematical sciences and the IVADO institute of data valorization. He holds an M.Sc.\ and a Ph.D.\ in computer science from Tel Aviv University. Previously, he was a postdoctoral researcher (2013-2015) in the Dept. of Computer Science at \'{E}cole Normale Sup\'{e}rieure in Paris (France), and a Gibbs Assistant Professor (2015-2018) in the Applied Math. Program at Yale University. His research focuses on manifold learning and geometric deep learning for exploratory data analysis, including methods for dimensionality reduction, visualization, denoising, data augmentation, and coarse graining. Also, he is particularly interested in biomedical data exploration applications of such methods, e.g., in single cell genomics/proteomics and neuroscience.
\end{IEEEbiography}

\vfill\break

\clearpage
\newpage
\setcounter{page}{1}
\appendices

\section{}

\subsection{Supplemental Definitions for Sec.~\ref{sec:theory}}\label{sec:apx-def}

Here, we provide additional definitions and notations.

\begin{defn}[Node Discriminability]\label{def:discriminability}
    Let $\bX$ be a node feature matrix, and let $\bY=F(\bX)$ for some graph filter $F: \mathbb{R}^{n\times d_X}\rightarrow\mathbb{R}^{n\times d_Y}$. We say that $F$ \textit{discriminates} two  nodes $v,w\in V$ based on $\bX$ if  $\bY[v]\neq \bY[w]$.
\end{defn}

\begin{defn}[K-step Neighborhood]\label{def:K-step}
    For $K\in \mathbb{N}_0$,\footnote{We use the conventions $\N\coloneqq \{1,2,3,\dots\}$, $\N_0\coloneqq \N\cup\{0\}$ and $\N_{\geq k}\coloneqq \{n\in\N : n\geq k\}$, for $k\in\N$.} we define the \textit{K-step neighborhood} of $v$ as
    $
        \nbh_{v}^{K}\coloneqq \{u\in V : 1\leq d(u,v)\leq K \}
    $
    (in particular $\nbh_v^0=\emptyset$ and $\nbh_v^1=\nbh_v$). We further write $\nbh_{\underline{v}}^K\coloneqq \nbh_v^K\cup\{v\}$ (in particular $\nbh_{\underline{v}}^0=\{v\}$).
\end{defn}

\begin{defn}[Induced Subgraph]\label{def:induced}
    Given a set $V_S\subseteq V$, we refer to a subgraph $S=(V_S, E_S)$ of $G$ as an \textit{induced subgraph}, if $E_S\subset E$ is the set of all edges that connect nodes from $V_S$, i.e., $E_S=E(V_S)\coloneqq\{e\in E : e=\{v,w\}\in V_S\times V_S\}$. In this case, we will  write $S\coloneqq G(V_S)$.
\end{defn}

\begin{defn}[Graph Isomorphism]\label{def:isomorph}
    A \textit{graph isomorphism} between graphs $G=(V,E)$ and $G'=(V',E')$ is a bijection $\phi : V\rightarrow V'$ such that $\{v,w\}\in E$ if and only if $\{\phi(v),\phi(w)\}\in E'$. We say that $G$ and $G'$ are $\phi$-isomorphic and write $G\cong^\phi G'$. We also define $\phi(A)\coloneqq\{\phi(w) : w\in A\}$ for $A\subset V$.
\end{defn}

\begin{defn}[Boundary and Interior of Induced Subgraph]\label{def:boundary-interior}
    Let $S=(V_S,E_S)$ be a an induced subgraph of $G=(V,E)$. We define the \textit{boundary} of $S$ as
    $
        \partial S\coloneqq \{w\in V_S : d(w, V\setminus V_S)=1\}.
    $
    Further, we define the \textit{interior} of $S$ as $\interior(S)\coloneqq V_S\setminus \partial S\subset V_S$. When convenient, we will also use the notation $\partial V_S\coloneqq\partial S$ and $\interior(V_S)\coloneqq\interior(S)$.
\end{defn}

\begin{defn}[Spatial Support of Graph Filter]\label{def:spatial}
    We say that a graph filter $F$ has \textit{spatial support} $L\in\N_0$ if the value of $F(\bX)[v]$ depends only on values of $\bX$ at nodes within $L$ steps of $v$, i.e., if for any $v\in V$ and any node feature matrices $\bX_1$ and $\bX_2$, we have  $F(\bX_1)[v]=F(\bX_2)[v]$ whenever $\bX_1[w]=\bX_2[w]$ for all $w\in \mathcal{N}_{\underline{v}}^L$.
\end{defn}

\subsection{Proof of Theorem~\ref{thm:main-1-new}}\label{sec:apx-main-1}

    We use induction to show that 
    $\bX^\ell_u=\bX^\ell_{\phi(u)}$ for all $u\in\nbh_{\underline{v}}^{L-\ell}$
    and all $0\leq \ell\leq L$.
    
    \vspace{8pt}
    \noindent\underline{$\ell=0$:} \,
    Let $u\in\nbh_{\underline{v}}^{L}$. By definition, we have $d(u,v)\leq L,$ and so the triangle inequality implies that $\nbh_{\underline{u}}^{K}\subseteq \nbh_{\underline{v}}^{K+L}.$ This implies that $G\left(\nbh_{\underline{u}}^{K}\right)\cong^\phi G\left(\nbh_{\underline{\phi(u)}}^{K}\right)$ and therefore the assumption that $\bX^0$ is $K$-intrinsic implies that $\bX^0_u=\bX^0_{\phi(u)}$.

    \vspace{8pt}
    
    \noindent\underline{$\ell\rightarrow\ell+1$:}
    \,
    Now assume that $\bX^{\ell}_u=\bX^{\ell}_{\phi(u)}$ for all $u\in\mathcal{N}_{\underline{v}}^{L-\ell}$, and let $w\in\mathcal{N}_{\underline{v}}^{L-(\ell+1)}$. Since $\mathcal{N}_{\underline{v}}^{L-(\ell+1)}\subseteq \mathcal{N}_{\underline{v}}^{L-\ell}$, we have $\bX^\ell_w=\bX^\ell_{\phi(w)}$. Moreover, since the degree is 1-intrinsic, we have $d_w=d_{\phi(w)},$ and therefore $f(\bX^\ell_w,d_w)=f(\bX^\ell_{\phi(w)},d_{\phi(w)})$. Since $\mathcal{N}_{\underline{w}}\subseteq \mathcal{N}_{\underline{v}}^{L},$ we have $G(\mathcal{N}_{\underline{w}}) \cong^\phi G(\mathcal{N}_{\underline{\phi(w)}})$. Therefore, since $\text{agg}$ is a function defined on a multi-set the result will follow from Eq. \ref{eq:ac-gnn} once we show that $g(\bX_u^\ell,d_w,d_u)=g(\bX_{\phi(u)}^\ell,d_{\phi(w)},d_{\phi(u)}$ for all $u\in\mathcal{N}_w$. Towards this end, we note that $\bX_u^\ell=\bX_{\phi(u)}^\ell$ for all $u\in\mathcal{N}_w$ by the inductive assumption and the fact that $\mathcal{N}_{\underline{w}}\subseteq \mathcal{N}_{\underline{v}}^{L-\ell}$. Similarly, since the degree is 1-intrinsic we have $d_w=d_{\phi(w)}$ and $d_{u}=d_{\phi(u)}$, and thus the proof is complete. 
\qed

\subsection{Proof of Lemma~\ref{thm:lem-onion-new}}\label{sec:apx-lem-onion}

We will proceed by \underline{induction over $j$}.
    
\vspace{8pt}

\noindent\underline{$j=0$:} \,
Using the notation established in Notation~\ref{notation-new}, we have that $\widetilde{U}_0=V_{\text{diff}}^{\tilde{d}}$, and as noted in (i), this is exactly the set of nodes in $\mathcal{N}_{\tilde{v}}^{\tilde{d}}$ where a structural difference manifests rel. to $\phi$ and $\btX$. Also note that $\delta_u^0 \neq 0$ for $u\in \tU_0$ according to (iv) of the notation. Thus the lemma holds or $j=0$.

\vspace{8pt}

\noindent\underline{$j\rightarrow j+1$:} \,
Assume Lemma~\ref{thm:lem-onion-new} to hold for some $0\leq j< \td$.
Thus, by the definition of structural difference, and as $d_w = d_{\phi(w)}$ for $w\in \interior\left(\nbh_{\underline{\tv}}^{\td}\right)$, our claim is equivalent to showing
\begin{equation}\label{eq:1-step-diff-new}
    \bY^{j+1}[u]\neq\bY^{j+1}[\phi(u)],
\end{equation}
and $\delta_u^{j+1} \neq 0$ for at least one $u\in \tU_{j+1}$ and
\begin{equation}\label{eq:1-step-eq-new}
    \bY^{j+1}[w]=\bY^{j+1}[\phi(w)],
\end{equation}
for all $w\in\nbh_{\underline{\tv}}^{\td-(j+1)}\setminus \tU_{j+1}$.

Since $\bP^{j+1}\btX=\bP\bP^{j}\btX=\bP\bY^j,$ we may use Eq.~\ref{eq:P-node} to see that 
Eq.~\ref{eq:1-step-diff-new} is equivalent to \begin{align}\label{eq:P-step-2-new}
    & \frac{1}{2} \bY^j[u] + \frac{1}{2} \sum_{w\in\nbh_{u}} d_w^{-1} \bY^j[w] \notag \\
    \neq & \frac{1}{2} \bY^j[\phi(u)] + \frac{1}{2} \sum_{w\in\nbh_{\phi(u)}} d_w^{-1} \bY^j[w]
\end{align}
for at least one $u\in \widetilde{U}_{j+1}\subseteq \mathcal{N}_{\tilde{\underline{v}}}^{d-j}$.  One may check that $\widetilde{U}_{j+1}\cap\widetilde{U}_j=\emptyset$. Therefore, inductively applying Eq.~\ref{eq:1-step-eq-new} (with $j$ in place of $j+1$) yields that $\bY^j[u]=\bY^j[\phi(u)]$. 
    Additionally, the inductive assumption implies that 
$\bY^j[w]=\bY^j[\phi(w)]$
for all $w\in\nbh_{u}\setminus \tU_j$. Therefore, Eq.~\ref{eq:P-step-2-new} is equivalent to
\begin{align}\label{eq:P-step-4-new}
   \sum_{w\in\nbh_{u}\cap \tU_j} d_w^{-1} \bY^j[w] &\neq \sum_{w\in\nbh_{\phi(u)}\cap \phi(\tU_j)} d_w^{-1} \bY^j[w].
\end{align}
This indeed holds for at least one $u\in \tU_{j+1}$ as
Eq.~\ref{eq:P-step-4-new} is equivalent to
\begin{equation}\label{eq:lem-proof-nocc}
    0 \neq \sum_{w\in\nbh_{u}\cap \tU_j} d_w^{-1} \bY^j[w] - d_{\phi(w)}^{-1} \bY^j[\phi(w)] = \delta_u^{j+1},
\end{equation}
and because $\interior\left(\nbh_{\underline{\tv}}^{\td}\right)$ exhibits no-cc with $\Delta_{j+1}\neq \emptyset$ according to the inductive hypothesis.
This completes the proof of Eq. \ref{eq:1-step-diff-new}.

Similarly Eq.~\ref{eq:1-step-eq-new} is equivalent to showing
\begin{align}
    & \frac{1}{2} \bY^j[w] + \frac{1}{2} \sum_{\tw\in\nbh_{w}} d_{\tw}^{-1} \bY^j[\tw] \notag \\
    = & \frac{1}{2} \bY^j[\phi(w)] + \frac{1}{2} \sum_{\tw\in\nbh_{\phi(u_{j+1})}} d_{\tw}^{-1} \bY^j[\tw], \label{eqn: simplied w-new}
\end{align}
for all $w\in\nbh_{\underline{\tv}}^{\td-(j+1)}\setminus \tU_{j+1}$. We first note that $\nbh_{\underline{w}}\subset\nbh_{\underline{\tv}}^{\td-j}$, and that $w\not\in\tU_{j}$, since otherwise we would have $d(\tilde{v},w)=\tilde{d}-j>\tilde{d}-(j+1).$ Therefore, the inductive hypothesis implies $\bY^j[w] = \bY^j[\phi(w)]$. Moreover, no $\tw\in\mathcal{N}_w$ can be an element of $\tU_j$ since otherwise, $w$ would be an element of $\tU_{j+1}$, which is a contradiction. Therefore, the inductive assumption implies $\bY^j[\tw]= \bY^j[\phi(\tw)]$ for all $\tw\in\nbh_w$.
We have already noted that $d_w=d_{\phi(w)}$ for all $w\in\interior\left(\nbh_{\tv}^{\tD}\right)$. Thus, Eq. \ref{eqn: simplied w-new} holds and the proof is complete.
\qed

\subsection{Proof of Theorem~\ref{thm:main-2-new}}\label{sec:apx-main-2}
We need to show that we can choose the parameters in a way that guarantees $F_{p-sct}(\bX)[v]\neq F_{p-sct}(\bX)[\phi(v)]$. For simplicity, we set $\bTheta=\Id_n$. In this case, since $\sigma$ strictly monotonic, and therefore injective, it suffices to show that we can construct $p$ such that 
\begin{equation}\label{eqn: Us are different-new}
    \bU_p(\bX)[v]\neq \bU_p(\bX)[\phi(v)].
\end{equation} 

Using binary expansion, we may choose $k_1,\ldots,k_m\in \mathbb{N}_0$, $k_i<k_{i+1}$, such that $d=2^{k_1}+\ldots 2^{k_m}$.
We will show that Eq. \ref{eqn: Us are different-new} holds for $p\coloneqq(k_1,\ldots,k_m).$ For $1\leq i \leq m,$ let $p_{:i}\coloneqq (k_1,\ldots,k_i)$
and let $t_i=\sum_{j=1}^i 2^{k_j}$, where 
$p_{:0}$  denotes the empty path of length 0 and $t_0=0.$ 

Recall the generalized path $P=(U_0,\ldots,U_d)$ defined in Notation \ref{notation-new} (iii).
We will use induction to show that, for $0\leq i \leq m$, there exists at least one node in $U_{t_i}$ where a structural difference is manifested, while no structural differences manifests in  $\mathcal{N}_{\underline{v}}^{d-t_i} \setminus U_{t_i}$ rel. to $\phi$ and $\bZ^i\coloneqq \bU_{p_{:i}}\bX.$ 
Since $t_m=d$ and $p_{:m}=p$, this claim will imply Eq. \ref{eqn: Us are different-new} and thus prove the theorem. 

\vspace{8pt}

\noindent\underline{$i=0$:} \,
Analogous to the proof of Lemma \ref{thm:lem-onion-new}, using the notation established in Notation~\ref{notation-new} (ii), we have that $U_0=V_{\text{diff}}^{d}$, which are exactly the nodes in $\mathcal{N}_{v}^{d}$ where a structural difference manifests. Thus the claim holds for $i=0$.

\vspace{8pt}

\noindent\underline{$i\rightarrow i+1$:} \,
We now assume the result holds for $i$. Since the degree is one-intrinsic and $t_{i+1}\geq 1$, we have that $d_{\tilde{v}}=d_{\phi(\tilde{v})}$  for all $\tilde{v}\in\mathcal{N}_{\underline{v}}^{d-t_{i+1}}$. Therefore, the claim is equivalent to showing
\begin{equation}\label{eq:wav-step-diff-new}
    \bZ^{i+1}[u]\neq\bZ^{i+1}[\phi(u)],
\end{equation}
for at least one $u\in U_{t_{i+1}}$ and
\begin{equation}\label{eq:wav-step-eq-new}
    \bZ^{i+1}[w]=\bZ^{i+1}[\phi(w)],
\end{equation}
for all $w\in\nbh_{\underline{v}}^{d-t_{i+1}}\setminus U_{t_{i+1}}$.

By the inductive assumption, Eq.~\ref{eq:wav-step-diff-new} and Eq.~\ref{eq:wav-step-eq-new} hold with $i$ in place of $i+1$ and since $\sigma$ is injective, they continue to hold when $\bZ^{i}$ is replaced with $\sigma(\bZ^{i})$. Therefore, we may apply Lemma~\ref{thm:lem-onion-new} with $\sigma(\bZ^i)$ in place of $\btX$, and $j=2^{k_{i+1}}$.
By the inductive assumption, we have that $\tilde{d}$ as defined in Lemma \ref{thm:lem-onion-new} is given by $\tilde{d}=d-t_i$. Therefore, $\td-j=d-t_i-2^{k_{i+1}}=d-t_{i+1}$ and $\tilde{U}_j=U_{t_{i+1}}$. Therefore, there exists at least one $u\in U_{t_{i+1}}$ where a structural difference is manifested, while for all $w\in\nbh_{\underline{v}}^{d-t_{i+1}}\setminus U_{t_{i+1}}$, no structural difference is manifested rel. to $\phi$ and $\bP^{2^{k_{i+1}}}\sigma\left(\bZ^i\right)$, i.e.,
\begin{equation}\label{eqn: udiffwsame1-1-new}
    \bP^{2^{k_{i+1}}}\sigma\left(\bZ^i\right)[u]\neq\bP^{2^{k_{i+1}}}\sigma\left(\bZ^i\right)[\phi(u)]
\end{equation}
and
\begin{equation}\label{eqn: udiffwsame1-2-new}
    \bP^{2^{k_{i+1}}}\sigma\left(\bZ^i\right)[w]=\bP^{2^{k_{i+1}}}\sigma\left(\bZ^i\right)[\phi(w)].
\end{equation}
Next, we again apply  Lemma~\ref{thm:lem-onion-new} with $\sigma(\bZ^i)$ in place of $\btX$, but this time with  $j=2^{k_{i+1}-1}$. We observe now that $\td-j=d-t_i-2^{k_{i+1}-1}>d-t_{i+1}$ and thus $u,w\in \nbh_{\underline{v}}^{\td-t_{i+1}} \subset \nbh_{\underline{v}}^{\td-j}\setminus \tU_j$. Therefore, no structural difference is manifested at either $u$ or $w$ rel. to $\phi$ and $\bP^{2^{k_{i+1}-1}}$, i.e., 
\begin{equation}\label{eqn: usamewsame2-1-new}
    \bP^{2^{k_{i+1}-1}}\sigma\left(\bZ^i\right)[u]=\bP^{2^{k_{i+1}-1}}\sigma\left(\bZ^i\right)[\phi(u)]
\end{equation}
and
\begin{equation}\label{eqn: usamewsame2-2-new}
    \bP^{2^{k_{i+1}-1}}\sigma\left(\bZ^i\right)[w]=\bP^{2^{k_{i+1}-1}}\sigma\left(\bZ^i\right)[\phi(w)].
\end{equation}
Together Eq. \ref{eqn: udiffwsame1-1-new}-\ref{eqn: udiffwsame1-2-new} and Eq. \ref{eqn: usamewsame2-1-new}-\ref{eqn: usamewsame2-2-new} imply
\begin{equation*}
    \bPsi_{k_{i+1}}\sigma\left(\bZ^i\right)[u]\neq\bPsi_{k_{i+1}}\sigma\left(\bZ^i\right)[\phi(u)]
\end{equation*}
and
\begin{equation*}
    \bPsi_{k_{i+1}}\sigma\left(\bZ^i\right)[w]=\bPsi_{k_{i+1}}\sigma\left(\bZ^i\right)[\phi(w)].
\end{equation*}
Eq. \ref{eq:wav-step-diff-new} and \ref{eq:wav-step-eq-new} now follow as
$\bZ^{i+1}=\bPsi_{k_{i+1}} \sigma\left(\bZ^i\right)$.
\qed

\subsection{Modified Variant of Theorem~\ref{thm:main-2-new}}\label{sec:apx-mod-main-2}

\begin{thm}\label{thm:shortest-path}
    Similar to Theorem~\ref{thm:main-2-new}, let $\phi : V \rightarrow V$, $v\in V$ and $K,L\in\N$ such that $G\left(\nbh_{\underline{v}}^{K+L}\right) \cong^\phi G\Big(\nbh_{\underline{\phi(v)}}^{K+L}\Big)$, and consider any K-intrinsic node feature matrix $\bX$.
    Suppose there exist nodes $U\subset\nbh_v^{K+L}$ where a structural difference rel. to $\phi$ and $\bX$ is manifested.
    If the nonlinearity $\sigma$ is strictly monotonic and there exists a unique $u\in U$ with minimal distance from $v$ and a unique shortest path between $u$ and $v$, we can define a scattering configuration $(p,\bTheta)$  such that scattering features $F_{p\psct}(\bX)$ defined as in Eq.~\ref{eq:scat-node} discriminate $v$ and $\phi(v)$. 
\end{thm}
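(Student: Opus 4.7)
The plan is to adapt the argument for Theorem~\ref{thm:main-2}, exploiting the uniqueness hypothesis to dispense with the non-coincidental-correspondence assumption. Let $u^\ast\in U$ denote the unique closest node to $v$, set $d:=d(v,u^\ast)$, and let $(u_0=u^\ast, u_1, \ldots, u_d=v)$ be the unique shortest path between them. In the language of Notation~\ref{notation}, this gives $V_{\text{diff}}^d=\{u^\ast\}$, $\tau=1$, and $U_j=\{u_j\}$ for every $0\leq j\leq d$, so the generalized path degenerates to a bona fide path. This is the structural ingredient that makes the rest of the argument go through cleanly.

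Setting $\bTheta=\mathbf{I}_n$ and using binary expansion to write $d=2^{k_1}+\cdots+2^{k_m}$ with $k_1<\cdots<k_m$, I would take $p:=(k_1,\ldots,k_m)$ and, following the proof of Theorem~\ref{thm:main-2}, show inductively that $\bZ^i:=\bU_{p_{:i}}\bX$ satisfies $\bZ^i[u_{d-t_i}]\neq \bZ^i[\phi(u_{d-t_i})]$ and $\bZ^i[w]=\bZ^i[\phi(w)]$ for all other $w\in\nbh_{\underline{v}}^{d-t_i}$, where $p_{:i}:=(k_1,\ldots,k_i)$ and $t_i:=\sum_{j=1}^{i}2^{k_j}$. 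At $i=m$ we have $t_m=d$, which yields $\bU_p(\bX)[v]\neq \bU_p(\bX)[\phi(v)]$; applying the outermost $\sigma$, which is injective by strict monotonicity, then gives the desired discrimination $F_{p\psct}(\bX)[v]\neq F_{p\psct}(\bX)[\phi(v)]$.

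The main obstacle, and the key technical step, is producing an analogue of Lemma~\ref{thm:lem-onion} that avoids invoking the absence of coincidental correspondence. The crucial observation is that when the generalized path is a singleton chain, the sum in Eq.~\ref{eq:coincident} collapses: the only neighbor of $u_{j+1}$ carrying a structural difference after $j$ diffusions is $u_j$, so the inequality one must verify reduces to $d_{u_j}^{-1}\bY^j[u_j]\neq d_{\phi(u_j)}^{-1}\bY^j[\phi(u_j)]$. Since the degree is one-intrinsic and $u_j$ lies in the interior of the isomorphic $(K{+}L)$-neighborhood, one has $d_{u_j}=d_{\phi(u_j)}$, so this inequality follows directly from the inductive hypothesis $\bY^j[u_j]\neq \bY^j[\phi(u_j)]$---no genericity assumption on the feature values is needed. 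Combining this modified lemma, applied at resolutions $2^{k_{i+1}-1}$ and $2^{k_{i+1}}$, with the identity $\bPsi_{k_{i+1}}=\bP^{2^{k_{i+1}-1}}-\bP^{2^{k_{i+1}}}$ exactly as in the proof of Theorem~\ref{thm:main-2} propagates the inductive claim from $\bZ^i$ to $\bZ^{i+1}$ and completes the argument. A minor bookkeeping point is that the uniqueness of the distinguished node on the unique shortest path is preserved under diffusion, so the analogue of Remark~\ref{rem:coincident-extended} is automatic rather than an extra hypothesis.
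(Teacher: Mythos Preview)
Your proposal is correct and follows essentially the same route as the paper: reduce to the proof of Theorem~\ref{thm:main-2}, observe that the generalized path $P=(U_0,\ldots,U_d)$ degenerates to singletons $U_j=\{u_j\}$, and then argue that the single place where the no-coincidental-correspondence hypothesis was invoked (Eq.~\ref{eq:P-step-4} in the proof of Lemma~\ref{thm:lem-onion}) becomes automatic because $|\Delta_{u_{j+1}}|=1$. The paper packages this last observation into a short auxiliary lemma (Lemma~\ref{lem:no-coincident}) rather than arguing it inline, but the substance is identical.

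One small point to tighten: you justify $d_{u_j}^{-1}\bY^j[u_j]\neq d_{\phi(u_j)}^{-1}\bY^j[\phi(u_j)]$ by asserting that $u_j$ lies in the \emph{interior} of $\nbh_{\underline{v}}^{K+L}$, whence $d_{u_j}=d_{\phi(u_j)}$. This is fine for $j\geq 1$, but for $j=0$ the node $u_0=u^\ast$ sits at distance $d$ from $v$, and when $d=K+L$ it can lie on the boundary, where degrees need not agree. The fix is already built into Definition~\ref{def:struc diff}: for boundary nodes the footnote requires $d_{\phi(w)}\bX[w]\neq d_w\bX[\phi(w)]$, which is exactly the inequality you need. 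The paper's Lemma~\ref{lem:no-coincident} handles both the interior and boundary cases uniformly via this footnote condition; you should invoke it as well rather than assuming equal degrees.
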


In Theorem~\ref{thm:shortest-path}, we replace the assumption of no coincidental correspondence (Definition~\ref{def:nocc}) in Theorem~\ref{thm:main-2-new} by the assumption of a unique shortest path between $v$ and some node $u$, which is the unique nearest node from $v$, where a structural difference is manifested.

\begin{proof}[Proof of Theorem~\ref{thm:shortest-path}]
    The only difference between Theorem~\ref{thm:shortest-path} and Theorem~\ref{thm:main-2-new} is that in Theorem~\ref{thm:main-2-new} we assume that there is no coincidental correspondence whereas here we assume that there is a unique $u\in U$ with minimal distance from $v$ and that there is a unique shortest path between $v$ and $u$. Inspecting the proof of Theorem~\ref{thm:main-2-new}, we see that the only spot where we used the assumption of no coincidental correspondence was when we invoked Lemma \ref{thm:lem-onion-new}. Therefore, it suffices to show that the conclusion of Lemma \ref{thm:lem-onion-new} holds under our revised assumptions. Moreover, inspecting the proof of Lemma \ref{thm:lem-onion-new}, we see that the only spot we use the assumption of no coincidental correspondence was to establish Eq.~\ref{eq:lem-proof-nocc}. Thus, it suffices to show that Eq.~\ref{eq:lem-proof-nocc} still holds.
    
    Let $P\coloneqq (U_0,U_1,\dots,U_{d})$ be the generalized path from $u$ to $v$ as defined in Notation \ref{notation-new}, and note that our assumption of a unique shortest path implies that each $U_j$ consists of a single node $u_j$, i.e., $U_j \coloneqq \{u_j\}$. Since the assumption of no coincidental correspondence is not used in the base case, we can use the inductive hypothesis that $u_j$ is the unique element of $\mathcal{N}_{\underline{v}}^{d-j}$ where a structural difference manifests with respect to $\bY^j$. We are then left to show that this implies that Eq.~\ref{eq:lem-proof-nocc} holds.
    But Eq.~\ref{eq:lem-proof-nocc} simplifies to $0 \neq d_{u_j}^{-1} \bY^j[u_j] - d_{\phi(u_j)}^{-1} \bY^j[\phi(u_j)]$, which is true according to the inductive hypothesis.
\end{proof}

\section{}\label{sec:apx-empirical}
\subsection{Technical Details}

\begin{table*}[!t]
    \centering
    \caption{Chosen hyperparameters for our Sc-GCN model on the examined datasets.}\label{tab:hyperparameters}
    \begin{tabular}{|c||c|c|cc|ccccc|}
        \multicolumn{3}{c}{~} & \multicolumn{2}{c}{$\overbrace{\hspace{60pt}}^\text{Scat.\ config.:}$} & \multicolumn{5}{c}{$\overbrace{\hspace{130pt}}^\text{Channel widths:}$}\\
        \cline{2-10}
        \multicolumn{1}{c}{~} & \multicolumn{1}{|c}{$\alpha$} & \multicolumn{1}{|c}{$q$} & \multicolumn{1}{|c}{$\boldsymbol{U}_{p_1}$} & \multicolumn{1}{c}{$\boldsymbol{U}_{p_2}$} & \multicolumn{1}{|c}{$\,\boldsymbol{A}^1\,$} & \multicolumn{1}{c}{$\,\boldsymbol{A}^2\,$} & \multicolumn{1}{c}{$\,\boldsymbol{A}^3\,$} & \multicolumn{1}{c}{$\boldsymbol{U}_{p_1}$} & \multicolumn{1}{c|}{$\boldsymbol{U}_{p_2}$} \\
        \hline
        \emph{Citeseer} & 0.50 & 4 & $\boldsymbol{\Psi}_2$& $\boldsymbol{\Psi}_2|\boldsymbol{\Psi}_3|$  & 10 & 10 & 10 & 9 & 30 \\
        \emph{Cora} & 0.35 & 4 & $\boldsymbol{\Psi}_1 $& $\boldsymbol{\Psi}_3$ & 10 & 10 & 10 & 11 & 6\\
        \emph{Pubmed} & 1.00 & 4 & $\boldsymbol{\Psi}_1 $& $\boldsymbol{\Psi}_2$ & 10 & 10 & 10 & 13 & 14 \\
        \emph{DBLP (1st layer)} & 1.00 & 4 & $\boldsymbol{\Psi}_1 $& $\boldsymbol{\Psi}_2$ & 10 & 10 & 10 & 30 & 30 \\
        \emph{DBLP (2nd layer)} & 0.10 & 1 & $\boldsymbol{\Psi}_1 $  & $\boldsymbol{\Psi}_2 $  & 40 & 20 & 20 & 20 & 20 \\
        \hline
    \end{tabular}
\end{table*}

\begin{table}[!b]
    \centering
    \caption{Chosen hyperparameters for our GSAN model on the examined datasets.}\label{tab:GSANhyperparameters}
    \begin{tabular}{|c||c|c|c|}
        \cline{2-4}
        \multicolumn{1}{c}{~} & \multicolumn{1}{|c}{$\alpha$} & \multicolumn{1}{|c|}{Number of heads} & \multicolumn{1}{c|}{Channel  width } \\
        \hline
        \emph{Texas} & 1.5 & 4 & 128 \\
        \emph{Chameleon} & 0.2 & 64 & 32 \\
        \emph{CoraFull} & 0.5 & 20 & 128 \\
        \emph{Wiki-CS} & 0.3 & 20 & 20 \\
        \emph{Citeseer} & 0.1 & 8 & 64 \\
        \emph{Pubmed} & 0.1 & 50 & 64 \\
        \emph{Cora} & 0.1 & 50 & 64 \\
        \emph{DBLP} & 0.2 & 16 & 128 \\
        \hline
    \end{tabular}
\end{table}

As with most neural networks, when implementing Sc-GCN and GSAN, one must make several architecture choices and tune several hyperparameters. In our experiments with Sc-GCN, we used either one or two hybrid layers, each comprised of three GCN channels and two scattering channels, followed by a single residual convolution layer. This fairly simple setup  simplified the network tuning process and was sufficient to obtain strong results, which outperformed numerous competing methods. However, it is likely that performance can be further improved by using wider or deeper networks. For \emph{Cora}, \emph{Citeseer} and \emph{Pubmed} we used only one hybrid layer as preliminary experiments indicated that the addition of a second one was not cost-effective (in light of  additional complexity created by the need to tune more hyperparameters). For \emph{DBLP}, two layers were used due to a significant increase in performance. We note that even with a single hybrid layer our model achieves $73.1\%$ test accuracy (compared to  $81.5\%$ with two layers) and still significantly outperforms GAT ($66.1\%$) and the other methods (below $60\%$). \\

To illustrate the utility of our model on large and challenging data sets, we now compare against GCN and GAT on two OGB benchmark node classification tasks. Our results are shown in Tab~\ref{tab:ogb_accu}. GSAN achieves 73.9\% using 3,832 parameters on ogbn-proteins and 72.3\% on ogbn-arxiv using 70,760 parameters, while GCN achieves  72.5\% ogbn-proteins and using 110,120 parameters and 71.7\% on ogbn-arxiv using 96,880 parameters, GAT achieves  73.6\% on ogbn-proteins using 1,238,554 parameters and 71.2\% on on ogbn-arxiv using 950,620 parameters. This suggests that our model can be more parameter-efficient on large datasets, especially on ogbn-proteins dataset, where we only use 3.5\% of the GCN's parameter counts. Furthermore, GSAN achieves 76.2\% on ogbn-proteins using 31,856 when we increase the width of GSAN. Thus, we see that our network is able to get better performance with fewer learned parameters.
\begin{table}[!ht]
\renewcommand{\arraystretch}{1.1}
\caption{Comparison of node classification test accuracy on Open Graph Benchmark datasets proteins and arxiv (higher is better).}\label{tab:ogb_accu}
\centering
\begin{tabular}{|c|c|c|c|c|c|c|}
\hline
Dataset  & \#nodes & \#edges & GCN & GAT & Sc-GCN & GSAN \\\hline
proteins & 1.3e5	 & 4.0e7 & 0.725 & 0.736  & 0.742 & 0.739  \\ \hline
arxiv & 1.7e5	 & 1.1e6	 & 0.717 & 0.712 & 0.719 & 0.723\\ \hline
\end{tabular}
\end{table}\\

\noindent\textbf{Validation \& testing procedure:} All tests were done using train-validation-test splits of the datasets. We used the validation accuracy  to tune the hyperparameters and reported the test accuracy in the comparison tables. To ensure fair comparison, we used the same splits for all methods. On \emph{Citeseer}, \emph{Cora} and \emph{Pubmed}, we used the same settings as in~\cite{kipf2016semi}, and followed the standard practices used in other works which have used these datasets. For \emph{DBLP}, as far as we know, no common standard is established in the literature.  Here, we used a ratio of $5:1:1$ between train, validation, and test for all node-classification datasets. For the datasets for graph classification and regression, we used a ratio of $8:1:1$.\\ 

\noindent\textbf{Hyperparameter tuning:} We tuned our hyperparameters on each set using a grid search and selected the setting which yields the highest accuracy on the validation set.

In Sc-GCN, we used the grid search to tune the parameter $\alpha$ used in the residual convolution, the exponent $q$ used in the nonlinearity, the scattering channel configurations $p_1, p_2$ (i.e., scales used in these two channels), and the widths of channels in the network, i.e. the hidden dimensions. The results of this tuning process are presented in Tab.~\ref{tab:hyperparameters}. For \emph{DBLP}, the hybrid-layer parameters are shared between the two layers in order to simplify the tuning process. This was generally less exhaustive than the process used for the other three datasets, since  our method significantly outperformed all other methods even with limited tuning.

\begin{table}
\begin{minipage}{0.48\textwidth}
\caption{Classification accuracies on ogbn-proteins dataset with different $\alpha$. }\label{tab:alpha_parameter}
\centering
\begin{small}
\begin{sc}
\begin{tabular}{|c||c|c|c|}
\hline
$\alpha$  & 0.01 & 0.1 & 0.5  \\\hline
Accuracy  & $72.32 \pm 0.55$ & $72.03 \pm 0.81$ & $72.45 \pm 1.30$  \\ \hline
$\alpha$  & 1.0 & 3.0 & 5.0  \\\hline
Accuracy  & $73.93 \pm 0.19$  & $72.23 \pm 0.45$ & $71.45 \pm 1.41$ \\ \hline
\end{tabular}
\end{sc}
\end{small}
\end{minipage}
\end{table}

For GSAN, the hyperparameter selection is less intricate, which in addition to improved performance, is one of the main advantages compared to Sc-GCN. In particular, we use the same scattering configurations and channel widths across all channels in a given layer. We do not tune the scattering configurations. We always set $m=1$ and always use the wavelets $\bPsi_1, \bPsi_2,$ and $\bPsi_3$ in our three scattering channels. Therefore, the hyperparameters simplify to $\alpha$, a (universal) channel width and the number of attention heads. These parameters might differ across different layers if the architecture relies on several ones. The results of this tuning process are presented in Tab.~\ref{tab:GSANhyperparameters}. \\

\noindent\textbf{Hardware \& software environment:}
All experiments were done on the same HPC cluster with intel i7-6850K CPU and NVIDIA TITAN X Pascal GPU. Both Sc-GCN and GSAN were implemented 
using PyTorch \cite{paszke2019pytorch}. Implementations of all other methods were taken directly from the code accompanying their publications.

\subsection{Ablation Study}\label{sec:apx-ablation}
The two primary novelties in Sc-GCN and GSAN are the  scattering channels (i.e., $\bPsi_1, \bPsi_2,$ and $\bPsi_3$) and the residual convolution (controlled by the hyperparameter $\alpha$). To explore their contribution and the dependence of the networks performance on the hyperparameters, Tab.~\ref{tab:alpha_parameter} show classification results over the ogbn-proteins dataset for $\alpha=0.01,0.1,0.5,1.0,3.0, 5.0$ over multiple scattering channel configurations. For  simplicity, we focus on \emph{ogbn-proteins}, but note that similar results are  observed on the other datasets. 

\newpage

\begin{table}[!ht]
    \caption{Impact of removing each individual channel from the optimal configuration on ogbn-proteins, while classifying using the remaining four channels. Full GSAN accuracy is 73.93\% .}
    \label{table:ablation_channnels}
    \centering
    \begin{tabular}{|c||cccccc|}
        \hline
         Channel &  $\bA $    & $\bA^2$     & $\bA^3 $  & $\bPsi_3$  & $\bPsi_2$     & $\bPsi_1$  \\
        \hline
        Accuracy & 71.22 & 71.23 & 72.16 & 71.47 & 72.98 & 70.75  \\
        \hline
    \end{tabular}
\end{table}

To examine the importance of the parameter $\alpha$ used in the residual graph convolution layer, we observe that setting $\alpha = 0$ effectively eliminates this layer (since then $\bA_{\text{res}}=\Id_n$). On the other hand, increasing $\alpha$ makes the filtering effect stronger, approaching a random-walk filtering as $\alpha\rightarrow \infty$. We evaluate the effect of this parameter on classification accuracy.
Our results, displayed in Tab.~\ref{tab:alpha_parameter}, indicate that increasing $\alpha$ to non-negligible nonzero values improves classification performance, which we interpret to be due to the removal of high-frequency noise. However, when $\alpha$ further increases (in particular when $\alpha=5$ in this case) the smoothing provided by this layer degrades the performance to a level close to the traditional GCN~\cite{kipf2016semi}. 

Finally, to examine the relative importance of the low-pass and  band-pass information, we provide an ablation study of the impact each channel. We focus on the ogbn-proteins dataset, set $\alpha=1.0$ and use the best channel configuration, which achieved 73.93\% accuracy when all channels were used. Then, we remove each of the band-pass or low-pass channels individually from the network, and reevaluate network performance with the remaining four channels. Our results are displayed in Tab.~\ref{table:ablation_channnels}. They indicate that the information captured by both the band-pass and the low-pass channels is important and that eliminating either decreases performance. In particular, we note that eliminating $\boldsymbol{\Psi}_1$  has a major impact on the accuracy.

\subsection{Implementation}
Python code accompanying this work is available on \href{http://github.com/dms-net/scatteringGCN}{github.com/dms-net/scatteringGCN} and \href{https://github.com/dms-net/Attention-based-Scattering}{github.com/dms-net/Attention-based-Scattering}.

\end{document}